\begin{document}

\title{Supervised Categorical Metric Learning with Schatten $p$-Norms}

%\author{Xuhui~Fan% <-this % stops a space
%\thanks{Xuhui Fan is with School of Mathematics \& Statistics, University of New South Wales, Australia.: (email: xhfan.ml@gmail.com).}
%}

\author{Xuhui~Fan,~Eric~Gaussier% <-this % stops a space
\thanks{Xuhui Fan is with School of Mathematics \& Statistics, University of New South Wales, Australia.: (email: xhfan.ml@gmail.com).}
\thanks{Eric Gaussier is with Laboratory of Informatics of Grenoble, University of Joseph Fourier, France: (email: eric.gaussier@imag.fr).}
}

% The paper headers
\markboth{IEEE transactions on Cybernetics,~Vol.~XX, No.~X, XXX~XXXX}%
{Shell \MakeLowercase{\textit{et al.}}: Bare Demo of IEEEtran.cls for Journals}
\newtheorem{prop}{Proposition}
\newtheorem{property}{Property}
\newtheorem{mydef}{Definition}
\newtheorem{theorem}{Theorem}
%\newtheorem{proo}[Proof]
%\floatname{algorithm}{Procedure}
\renewcommand{\algorithmicrequire}{\textbf{Input:}}
\renewcommand{\algorithmicensure}{\textbf{Output:}}
\maketitle

\begin{abstract}
Metric learning has been successful in learning new metrics adapted to numerical datasets. However, its development on categorical data still needs further exploration. In this paper,  we propose a method, called CPML for  \emph{categorical projected metric learning}, that tries to efficiently~(i.e. less computational time and better prediction accuracy) address the problem of metric learning in categorical data. We make use of the Value Distance Metric to represent our data and propose new distances based on this representation. We then show how to efficiently learn new metrics. We also generalize several previous regularizers through the Schatten $p$-norm and provides a generalization bound for it that complements the standard generalization bound for metric learning. Experimental results show that our method provides state-of-the-art results while being faster.
\end{abstract}

% Note that keywords are not normally used for peerreview papers.
\begin{IEEEkeywords}
Metric Learning; Categorical Data.
\end{IEEEkeywords}

% For peer review papers, you can put extra information on the cover
% page as needed:
% \ifCLASSOPTIONpeerreview
% \begin{center} \bfseries EDICS Category: 3-BBND \end{center}
% \fi
%
% For peerreview papers, this IEEEtran command inserts a page break and
% creates the second title. It will be ignored for other modes.
\IEEEpeerreviewmaketitle

\section{Introduction}
Metric (or distance metric) learning represents an essential task for many machine learning problems. Relying on appropriate distance metrics can boost the performance of many learning algorithms, such as $k$-Nearest Neighbor, of which its success is largely depended on the distance metric of the points closest to a given point. Similarly, in $k$-means clustering, the shortest distance between a data point and all cluster centers also determines its cluster assignment. Several important studies have been conducted in this area, including the \emph{information theoretic metric learning} (ITML) approach \cite{davis2007information}, the \emph{large margin nearest neighbour} (LMNN) approach\cite{weinberger2006distance}, or the \emph{pseudo-metric online learning algorithm} (POLA) \cite{shalev2004online}.

%The Euclidean distance is one of the most widely used metric to measure the proximity between data points. However, many issues are not well addressed with this distance, as the relevance of each feature, the correlation between them, or the adjustment to a given prior information. Metric learning \cite{xing2002distance} has been proposed to overcome these shortcomings, especially on numerical features.

However, in many cases, numerical features usually come along with categorical ones that also contain discriminative information. For instance, the categorical features of educational level and marriage status represent valuable information in the credit card fraud detection problem. The standard method to deal with these categorical features is to treat them as numerical ones by transforming them into binary vectors. However, the feature number is increased in a polynomial rate. Unsupervised learning methods for categorical distances, as  \cite{Boriah_similaritymeasures}, usually rely on the simple overlapping similarity, that varies from the simple counting, through co-occurrence frequency to entropy. When label information is available,  the supervised-learning of categorical measures \cite{Cheng20041471,kernel_density_metric_learning} is further developed. However, these methods either ignore the correlation between data samples, or come at a heavy computational cost. In addition, none of these studies provide theoretical guarantees on the generalization bound of the learned metric.

%While we can not guarantee the best measurement choice for different datasets, several supervised-learning categorical measures assist us in measuring the distance. In these approaches, the metric is adjusted through the empirical label information, with the first adaptively calculate the distance between each feature attributes and the later seeks a larger-size metric. However, either the feature independence assumption or scalability issue exists.

%aims at employing the label information to adjust the similarity calculation to better facilitate the classification problem. However, the scalability issue is always the bottleneck to these measures. Also, no theoretical guarantee on these generalization bound have given for these work.

%For instance, in the credit fraud detection problem, except for the customer's continuous salary feature, other features such as marriage status, living suburb, education level, social activity and criminal record also can determine the credit fraud probability. A single man living in some undeveloped suburb with low education level would be more likely to be bankrupted. Without appropriate understanding and modeling on these categorical features, the prediction performance would be doubted as some vital information would be attached to these categorical features.

To address the above problems, in our work, we put forward a new method, namely \emph{categorical projected metric learning} (CPML), to efficiently learn metrics on categorical features and utilize them in real classification tasks. First, we employ the standard \emph{value distance metric} (VDM) \cite{stanfill1986toward} to project each feature value into a class-based vector. Then these vectors are re-arranged to define new distances relying on the correlation between features. These new defined distances are further utilized in $k$-Nearest Neighbor classification tasks. Comparing to previous methods, our approach is superior in terms of computational cost, without loss of classification accuracy. It also comes with theoretical guarantees that ensure its reliability.

To achieve this, we apply the Schatten $p$-norm ($p\ge 1$) to regularize the eigenvalues of the metric and promote low rank solutions. Several popular regularizers are special cases of this Schatten $p$-norm; $p=1$ refers to the trace norm, $p=2$ corresponds to the Frobenius norm and $p=\infty$ represents exactly the maximum eigenvalue norm \cite{ying2012distance}. Correspondingly, we provide the generalization bound for this Schatten $p$-norm ($p \ge 1$), as a supplement for the standard generalization bound in metric learning literature \cite{DBLP:journals/corr/abs-1207-5437}.
%is also obtainable and presented here. This metric regularizer provides a general framework for these popular regularizers, which enable us to get an explicit comparison of these regularizers. Through the empirical results here, further conclusions can be obtained and can help us choose a better regularizer.

On the experimental part, we test the performance of our model in different scenarios. By adding different number of noisy features, our model is shown to be able to correctly identify the noisy features and "denoise" them. By testing the running time in different data sizes and class numbers, we show that the class number hardly  influences our model's running time. Lastly, detailed results obtained on synthetic and real world data sets confirm our models' competitive results against other benchmark models.

The remainder of this paper is organized as follows. In Section \ref{sec_2}, we introduce the \emph{value distance metric} (VDM) method and the general framework of metric learning. We then we propose the \emph{categorical projected metric learning} (CPML) framework aiming at efficiently learning metrics on categorical features for classification tasks. A generalization bound for the general Schatten $p$-norm ($p\ge1$) is provided in Section \ref{sec_4}. After a literature review in Section \ref{sec_5}, we provide and discuss experimental results in Section \ref{sec_exps}. The conclusion and future work can be found in Section \ref{sec_7}.

\section{Preliminary Knowledge} \label{sec_2}

Throughout this study, we will use the notations provided in Table \ref{t_3}.
\begin{table}[htbp]
\caption{Notation table} \label{t_3}
\centering
\begin{tabular}{c|c}
\hline
Notation & Explanation \\
\hline
$n$ & Number of data points \\ \hline
$D$ & Number of features \\ \hline
$C$ & Number of classes \\ \hline
$M$ & $D\times D$ matrix \\
& (metric to be learned) \\ \hline
$s_d$ & Number of possible values for feature $d$ \\ \hline
$s_{\text{max}}$ & Maximum number of possible values for all the features \\ \hline
\multirow{2}{*}{$\phi(\boldsymbol{x}_i)$} & VDM-based projection of example $\boldsymbol{x}_i$ \\
& ($D\times C$ matrix) \\ \hline
\multirow{2}{*}{$\phi(\boldsymbol{X})$} & VDM-based projection of the dataset $\boldsymbol{X}$ \\
& ($D\times C\times n$ tensor) \\ \hline
\multirow{2}{*}{$N_{cd}(f)$} & Number of times value $f$ is observed \\
& for feature $d$ in class $c$ \\ \hline
$R_+^{D\times D}$ & Set of semi-definite positive matrices \\ \hline
\end{tabular}
\end{table}

As is common setting, our training data includes the observations $\boldsymbol{X}=(\boldsymbol{x}_1,\ldots, \boldsymbol{x}_n)$ and the corresponding labels $\{y_i\}_{i=1}^n$. Furthermore, $\forall i$, $\boldsymbol{x}_i$ contains $D$ categorical features, i.e. $\boldsymbol{x}_i=[\boldsymbol{x}_{i1}, \ldots, \boldsymbol{x}_{iD}]^{\top}$, each $\boldsymbol{x}_{id}$ taking value in $\{1, \ldots, s_d\}$, where $s_d$ denotes the number of possible values for feature $d$.

We now introduce the representation we use for categorical features and provide the general framework for metric learning.

\subsection{Value Distance Metric}

The \emph{value distance metric} (VDM) is a method for representing categorical features into a $C$-dimensional normalized vector ($C$ corresponds to the number of classes, see Table~\ref{t_3}). $\forall d\in\{1, \ldots, D\}$, VDM partitions the whole dataset into $s_d$ subgroups, where the data points in the same subgroup have the same value on feature $d$. Then, VDM would histogram the data points according to their corresponding class labels and the histogram is normalized to represent the $d^{\hbox{th}}$ feature's class distribution.

More specifically, the feature $\boldsymbol{x}_{id}$ is first transformed into the $C$-length vector as follows:
\begin{equation} \label{VDM_eq_1}
\phi (\boldsymbol{x}_{id}=f)=[\widehat{P}(y=1|\boldsymbol{x}_{id}=f),\ldots , \widehat{P}(y=C|\boldsymbol{x}_{id}=f)]^{\top} 
\end{equation}
where $\widehat{P}(y=c|\boldsymbol{x}_{id}=f)$ is the estimate of the probability of having class $c$ when the $d^{\hbox{th}}$ feature of point $i$ has value $f$. It is defined as:
\begin{equation}
\widehat{P}(y=c|\boldsymbol{x}_{id}=f)=\frac{N_{cd}(f)}{\sum_{c=1}^{C}N_{cd}(f)} 
\end{equation}
Here $N_{cd}(f)=\sum_{i=1}^N\pmb{1}(x_{id}=f, y_i = c)$ denotes the number of times the feature value $f$ occurs in class $c$ for the $d^{\hbox{th}}$ feature, and $\sum_{c=1}^{C}N_{cd}(f)=\sum_{i=1}^N\pmb{1}(x_{id}=f)$ refers to the total appearances of feature value $f$ for the $d^{\hbox{th}}$ feature. VDM is thus a class-based projection, inspired from the original value distance metric \cite{stanfill1986toward,Cheng20041471}. Recent work in \cite{kernel_density_metric_learning} has also employed this projection.

{We take the credit risk data as an example. In Table~\ref{VDM_example}, we have a set of $6$ persons. The occupations, educations and marital status of these persons are taken as features, and the credit risk level is taken as labels. For person $1$'s occupation feature, which is Accountant, we can first estimate the probability of different credit risk for person being the accountant as follows:
\begin{align}
&\widehat{P}(y=\text{Low}|\pmb{x}_{11}=\text{Accountant}) \nonumber \\
=& \frac{\sum_{i=1}^6\pmb{1}(y_i=\text{Low}, \pmb{x}_{i1}=\text{Accountant})}{\sum_{i=1}^6\pmb{1}( \pmb{x}_{i1}=\text{Accountant})}=\frac{1}{2} \nonumber
\end{align}
Correspondingly, we can represent Eq.~(\ref{VDM_eq_1}) as:
\begin{align}
\phi(\pmb{x}_{i1}=\text{Accountant}) = (\frac{1}{2}, 0, \frac{1}{2})
\end{align}

\begin{table}[t]
\caption{Categorical data example} \label{VDM_example}
\centering
\begin{tabular}{|c|c|c|c|c|}
\hline
ID  & Occupation & Education & Marital & Risk\\
\hline
$1$  & Accountant & Bachelor & Married & Low\\
\hline
$2$  & Doctor & Master & Married & Low\\
\hline
$3$  & Plumber & TAFE & Single & High\\
\hline
$4$  & Plumber & High school & Single & Middle\\
\hline
$5$  & Doctor & Master & Married & Middle\\
\hline
$6$  & Accountant & Master & Single & High\\
\hline
\end{tabular}
\end{table}}

\subsection{Metric Learning}
\emph{Metric learning} naturally arises in the question of how to assess the similarity of different objects. Its corresponding distance function is usually set as the Mahalanobis distance, with the inverse covariance matrix as the unknown variables. With the prior knowledge of class labels or side information, we are trying to find an optimal metric that aims at minimizing the number of errors made.
\begin{equation}
\arg\min_{M} f(M)+\lambda r(M) \nonumber
\end{equation}
Here $f(\cdot)$ is the loss function, $r(\cdot)$ is the regularization function and $\lambda$ is the tuning parameter balancing the loss incurred and the model complexity. 

Our approach, as well as most metric learning approaches, fits within this general setting.

\section{Categorical Metric Learning}

%\subsection{Categorical Data Projection}

%We design a new representation form of the categorical data's projected value here. Each features' VDM projection is aligned an independent row alone.
VDM transforms each data point $\boldsymbol{x}_i$ into a $D\times C$ matrix, denoted as $\phi(\boldsymbol{x}_i)\in \mathbb{R}^{D\times C}, \forall i\in\{1, \ldots, n\}$. Each row in $\phi(\boldsymbol{x}_i)$ is an estimation of the class distribution, so that $\phi(\boldsymbol{x}_i)\cdot\boldsymbol{1}_{C\times 1} = \boldsymbol{1}_{D\times 1}$. The $c^{th}$ column, $\phi_c(\boldsymbol{x}_i)$, in $\phi(\boldsymbol{x}_i)$ represents the popularity of class $c$ in different features, which could be denoted as $\phi(\boldsymbol{x}_i)=[\phi_1(\boldsymbol{x}_i), \phi_2(\boldsymbol{x}_i), \ldots, \phi_C(\boldsymbol{x}_i)]$. The whole projected dataset is $
\phi(\boldsymbol{X})\in \mathbb{R}^{D\times C\times n}$.

Based on this representation, we define two distances that take into account the correlations between features inside each class. They differ in the way of treating the metric learned for different classes (different metrics for different classes are learned in one case, whereas a single metric for all classes is learned in the other case).

\begin{mydef} {\bf CPm}: The categorical projected multi (CPm) distance considers the features' individual metrics $\{M_c\}_{c=1}^C$ among different classes and is defined by:
\begin{equation} \label{eq_cpm}
\begin{split}
& d_M(\boldsymbol{x}_i, \boldsymbol{x}_j)\\
= &\sum_{c=1}^C{Tr}\left((\phi_c(\boldsymbol{x}_i)-\phi_c(%
\boldsymbol{x}_j))(\phi_c(\boldsymbol{x}_i)-\phi_c(\boldsymbol{x}_j))^{\top}M_c^{\top}\right) \\
=&\sum_{c=1}^C{Tr}\left(A^{ij}_c M_c^{\top}\right) = \sum_{c=1}^C\sum_{p, q}A^{ij}_{c,pq}M_{c,pq}
\end{split}
\end{equation}
\end{mydef}
Here $A^{ij}_c = (\phi_c(\boldsymbol{x}_i)-\phi_c(\boldsymbol{x}_j))(\phi_c(\boldsymbol{x}_i)-\phi_c(\boldsymbol{x}_j))^{\top}, M_c\in \mathbb{R}_+^{D\times D}$.

\begin{mydef} {\bf CPs}: The categorical projected single (CPs) distance considers the features' correlation by assuming that all the classes share the same metric, and is defined by:
\begin{equation} \label{eq_cps}
\begin{split}
&d_M(\boldsymbol{x}_i, \boldsymbol{x}_j) \\
= &{Tr}\left((\phi(\boldsymbol{x}_i)-\phi(%
\boldsymbol{x}_j))(\phi(\boldsymbol{x}_i)-\phi(\boldsymbol{x}_j))^{\top}M^{\top}\right) \\
=&{Tr}(A^{ij}M^{\top}) = \sum_{p, q}A^{ij}_{pq}M_{pq}
\end{split}
\end{equation}
\end{mydef}
Here $A^{ij} = (\phi(\boldsymbol{x}_i)-\phi(\boldsymbol{x}_j))(\phi(\boldsymbol{x}_i)-\phi(\boldsymbol{x}_j))^{\top}, M\in \mathbb{R}_+^{D\times D}$.

%The first is to consider the features' correlation within the same class, while these correlations may vary with the class. The second is to fix one metric for features' correlation within the same class.

We assume the positive semi-definite property of $\{M_c\}_{c=1}^C$ and $M$ to ensure the the non-negativeness of $d_M(\boldsymbol{x}_i, \boldsymbol{x}_j)$. The following property (see the proof in the appendix) furthermore shows that the above definitions correspond to valid distances.

\begin{property}
Assume $M\succeq 0$, then $d_M(\boldsymbol{x}_i, \boldsymbol{x}_j)\ge 0$, and, $\forall \boldsymbol{x}_i, \boldsymbol{x}_j, \boldsymbol{x}_k$, we have $d_M(\boldsymbol{x}_i, \boldsymbol{x}_j)\le d_M(\boldsymbol{x}_i, \boldsymbol{x}_k) + d_M(\boldsymbol{x}_k, \boldsymbol{x}_j) $.
\end{property}

It is obvious $d_M(\boldsymbol{x}_i, \boldsymbol{x}_i) = 0$ and $d_M(\boldsymbol{x}_i, \boldsymbol{x}_j)=d_M(\boldsymbol{x}_j, \boldsymbol{x}_i)$. Thus, our definitions of CPm and CPs $d_M(\boldsymbol{x}_i, \boldsymbol{x}_j)$ in Eq. (\ref{eq_cpm}) and (\ref{eq_cps}) correspond to valid metrics.

\subsection{Objective Function \& Optimization}

From the class information in the data set, one can derive a constraint set based on triplets of points, $\mathcal{T}=\{(i,j, k)|d_M(\boldsymbol{x}_i, \boldsymbol{x}_j)<d_M(\boldsymbol{x}_i,
\boldsymbol{x}_k)\}$, that indicates that any point should be closer to points of the same class than to points of other classes. From this constraint set, our task is to find an optimal $M$ such that the \emph{empirical loss} $\boldsymbol{\varepsilon}_{\mathcal{T}}(M)$ minimized. The empirical loss $\boldsymbol{\varepsilon}_{\mathcal{T}}(M)$ is here defined as:
\begin{equation}  \label{eq_2}
\boldsymbol{\varepsilon}_{\mathcal{T}}(M)=\frac{1}{|\mathcal{T}|}%
\sum_{(i,j,k)\in\mathcal{T}} \left[d_M(\boldsymbol{x}_i, \boldsymbol{x}%
_j)+b-d_M(\boldsymbol{x}_i, \boldsymbol{x}_k)\right]_+
\nonumber
\end{equation}
$[\cdot]_+$ is the hinge loss function, and $b$ is the margin parameter, often set to $b = 1$.

The above setting can be used for constraint sets based on pairs of points, $\mathcal{P}=\{(i,j)|d_M(\boldsymbol{x}_i, \boldsymbol{x}_j)<b\}$, in which case the empirical loss takes the form:
\begin{equation}  \label{eq_pc}
\boldsymbol{\varepsilon}_{\mathcal{P}}(M)=\frac{1}{|\mathcal{P}|}%
\sum_{(i,j)\in\mathcal{P}} \left[1+d_M(\boldsymbol{x}_i, \boldsymbol{x}%
_j)-b\right]_+
\nonumber
\end{equation}
It can also be used for quadratic constraint sets, based on 4-uples of data points, $\mathcal{Q}=\{(i,j,k,l)|d_M(\boldsymbol{x}_i, \boldsymbol{x}_j)<d_M(\boldsymbol{x}_k,\boldsymbol{x}_l)\}$, in which case the empirical loss takes the form:
\begin{equation}  \label{eq_qc}
\boldsymbol{\varepsilon}_{\mathcal{Q}}(M)=\frac{1}{|\mathcal{Q}|}\sum_{(i,j,k, l)\in\mathcal{Q}} \left[d_M(\boldsymbol{x}_i, \boldsymbol{x}_j)+b-d_M(\boldsymbol{x}_k, \boldsymbol{x}_l)\right]_+
\nonumber
\end{equation}
{In real world applications, no single constraint would dominate the other two in different scenarios. When two points from different classes can indeed be very close or far away to each other, the triplet constraint set $\mathcal{T}$ and the pair constraint set $\mathcal{P}$ do not make strong assumptions in this case, whereas the quadratic constraint $\mathcal{Q}$ might set these assumptions. We believe that both $\mathcal{P}$ and $\mathcal{T}$ define valuable information on which to learn a new metric. We present here the solution of the optimization problem based on $\mathcal{T}$, but the same development can be used for $\mathcal{P}$.}

\begin{algorithm}[hbp]
\caption{The CPML framework }
\label{alg_1}
\begin{algorithmic}
    \REQUIRE $\phi(\boldsymbol{X})$: $D\times C\times n$ projected tensor for the whole dataset; \\ $\alpha_t$, stepsize at the $t^{\textrm{th}}$ iteration; $\alpha$, stepsize's decreasing rate\\
    $\lambda$: regularization parameter; $\mathcal{T}$: constraint set for the whole dataset
    \ENSURE the resulted metrics $M$
    \STATE Initialize $M= I_{D\times D}$
    \STATE Compute $A$ valuees between each data pair
    \REPEAT
    \STATE Compute the violated constraint set $\widehat{\mathcal{T}}$
    \STATE Compute the subgradient for $f$:
    {\small \begin{equation}
    \frac{\partial}{\partial M} f(M)=\frac{\partial r}{M}(M) + \frac{\lambda}{|\mathcal{T}|}\sum_{(i,j,k)\in\widehat{\mathcal{T}}}\left[A^{ij}-A^{ik}\right]
    \end{equation}}
    \STATE Use {\bf backtrack line search} method to determine $\alpha_t$';
    \STATE Update $M$ as: $M_{t+1} = M_t-\alpha_t\frac{\partial}{\partial M} f(M)$;
    \STATE Project $M$ back into the Positive Semi-Definite cone
    \UNTIL{converge}
%    \RETURN z
\end{algorithmic}
\end{algorithm}

By choosing a suitable metric regularizer $r(M)$, our problem amounts to minimize the objective function $f(M)$:
\begin{equation}
\arg \min_{M}f(M)=\arg \min_{M}\left\{ \boldsymbol{\varepsilon }_{\boldsymbol{x}}(M)+\lambda \cdot r(M)\right\} \nonumber
\end{equation}
where $\lambda$ is the regularization parameter.

{The choice of the metric regularizer affects the structure of the solution learned. For instance, the $L_1$-norm promotes sparse metric, while the trace norm encourages  metrics with low rank; the Frobenius norm ($\Vert M\Vert _{2}^{2}=\sum_{pq}M_{pq}^{2}$), on the other hand, tends to yield robust solutions.}

If $r(M)$ is a convex function, the objective function $f(M)$ is also convex with respect to $M$. As the empirical hinge loss is non-differentiable at $0$, we apply the Projected Subgradient Descent method to seek the optimal value of $M$. The subgradient of $f(M)$ is composed of two terms: one in the regularization, i.e. the matrix $\frac{\partial r}{M}(M)$, and the other in the empirical loss function. The subgradient of the empirical loss is the sum of the sub gradient of the hinge loss in each constraint in $\mathcal{T}$. For each $(i,j,k)\in\mathcal{T}$, the subgradient direction is $0$ if the loss is $0$, i.e. $d_M(\boldsymbol{x}_i, \boldsymbol{x}_j)+ b \le d_M(\boldsymbol{x}_i, \boldsymbol{x}_k)$. For $d_M(\boldsymbol{x}_i, \boldsymbol{x}_j)+ b > d_M(\boldsymbol{x}_i, \boldsymbol{x}_k)$, we get:
\begin{equation}
\begin{split}
&\frac{\partial \left[d_M(\boldsymbol{x}_i, \boldsymbol{x}_j)+ b -d_M(%
\boldsymbol{x}_i, \boldsymbol{x}_k)\right]_+}{\partial M} \\
= & \frac{\partial\left[Tr(A^{ij}M)+ b -Tr(A^{ik}M)\right]_+}{\partial M} \\
= & A^{ij}-A^{ik} \\
\nonumber
\end{split}%
\end{equation}

%\begin{equation}
%\frac{\partial}{\partial M}\boldsymbol{\varepsilon}_{\mathcal{T}}(M) = \left\{\begin{array}{ll}
%0, & d_M(\boldsymbol{x}_i, \boldsymbol{x}_j)+ b \le d_M(\boldsymbol{x}_i, \boldsymbol{x}_k); \\
%A^{ij}-A^{ik}, & \textrm{otherwise}. \\
%\end{array}\right.
%\end{equation}
%

\begin{algorithm}[hbp]
\caption{Backtrack line search for determining the stepsize \cite{Nocedal2006NO} }
\label{alg_2}
\begin{algorithmic}
    \REQUIRE $\phi(\boldsymbol{X}), M$: projected matrix, current metric\\
%    $\lambda_1, \ldots, \lambda_p$: regularizer parameters for each metric\\
    $f$: objective function\\
    $\frac{\partial}{\partial M} f(M)$: gradient of the objective function $f$\\
    $\alpha$: stepsize decreasing rate, usually set to $0.1$
    \ENSURE the stepsize $\alpha_t$
    \STATE Initialize $\alpha_t$ = 1;
    \REPEAT
    \STATE $\alpha_t = \alpha_t\cdot \alpha$;
    \UNTIL{$f(M-\alpha_t\frac{\partial}{\partial M} f(M))\le f(M)-\frac{\alpha_t}{2}\|\frac{\partial}{\partial M} f(M)\|^2$}
%    \RETURN z
\end{algorithmic}
\end{algorithm}

Thus, the subgradient of $f(M)$ is:
\begin{equation}
\frac{\partial}{\partial M}f(M) = \frac{\partial r}{M}(M) + \frac{\lambda}{|\mathcal{T}|} \sum_{(i,j,k)\in\widehat{\mathcal{T}}}\left[A^{ij}-A^{ik}\right]
\nonumber
\end{equation}
where $\widehat{\mathcal{T}} = \{ {(i,j,k)|d_M(\boldsymbol{x}_i, \boldsymbol{x}_j)+ b > d_M(\boldsymbol{x}_i, \boldsymbol{x}_k)}\}$ denotes the set of triplets for which the constraint is violated.

After each gradient step $M_{t+1}=M_t-\alpha_t\frac{\partial}{\partial M_t} f(M_t)$, we need to project $M$ back to the positive semi-definite cone. This is conducted by setting the negative eigenvalues in $M_t$ to be $0$.

The complete process is described in Algorithm \ref{alg_1}. It is important to note that the $A^{ij}$ values can be computed before the iteration, which reduces the computational cost.

%
% The following two commands are all you need in the
% initial runs of your .tex file to
% produce the bibliography for the citations in your paper.

%we are trying to discover the coupling relationship between the features. For any two data points $\phi(\boldsymbol{x}_{i}, \boldsymbol{x}_{j})$, their sub-distance in feature $d_1$ is calculated as $d(\phi(\boldsymbol{x}_{id_1}),\phi( \boldsymbol{x}_{jd_1})) = M_{d_1d_1}(\phi(\boldsymbol{x}_{id_1})-\phi(\boldsymbol{x}_{jd_1}))^{\top}\cdot(\phi(\boldsymbol{x}_{id_1})- \phi(\boldsymbol{x}_{jd_1}))$. For any two features pair $(d_1, d_2)\in\{1, \ldots, D\}$, $M_{d_1d_2}$ represents the inter-features coupling relationship.
%
%\begin{equation}
%\left( \begin{array}{cccc}
%M_{11} & M_{12} & \ldots & M_{1C}\\
%M_{12} & M_{22} & \ldots & M_{2C}\\
%\vdots & \vdots & \ddots & \vdots\\
%M_{1C} & M_{2C} & \ldots & M_{CC}\\
%\end{array} \right)\to \textrm{KDML}
%\end{equation}
%\begin{equation}
%\left( \begin{array}{cccc}
%M_{11} & 0 & \ldots & 0\\
%0 & M_{22} & \ldots & 0\\
%\vdots & \vdots & \ddots & \vdots\\
%0 & 0 & \ldots & M_{CC}\\
%\end{array} \right)\to CPm
%\end{equation}
%\begin{equation}
%\left( \begin{array}{cccc}
%M_{11} & 0 & \ldots & 0\\
%0 & M_{11} & \ldots & 0\\
%\vdots & \vdots & \ddots & \vdots\\
%0 & 0 & \ldots & M_{11}\\
%\end{array} \right)\to CPs
%\end{equation}

\section{Rademacher Complexity and Schatten $p$-Norms} \label{sec_4}

In choosing the regularizer $r(M)$, we here rely on a Schatten $p$-norm of the learned metric $[Tr(M^p)]^{\frac{1}{p}}, p\ge 1$, that generalizes several well-known metric-regularizers.
%More generally:
%%
%\begin{equation}
%Tr(M^p) = Tr\left((Q^{\top}\Lambda Q)^p\right)=\sum_{k=1}^D\lambda_k^p \nonumber
%\end{equation}
%%
%where $M=Q^{\top}\Lambda Q$ is the eigen-decomposition of the symmetric matrix $M$ ($Q$ is an orthogonal matrix, $\Lambda$ a diagonal matrix the elements of which, $\{\lambda_k\}_{k=1}^d$, are the eigenvalues of $M$).
%%
The study in \cite{DBLP:journals/corr/abs-1207-5437} gives a generalization bound on the pair-comparison empirical loss, defined by:
%. More specifically, the empirical loss is approximated by the hinge loss for the pair-comparison, that is:
%
\begin{equation}
\boldsymbol{\varepsilon}_{\mathcal{T}}(M, b)= \frac{1}{n(n-1)}\sum_{i\neq j}\left[1+r_{ij}(d_M(x_i,x_j)-b)\right]_+ \nonumber
\end{equation}
Here $r_{ij}=1$ if $y_i=y_j$, otherwise $r_{ij}=1$; $b$ is the margin parameter. One expects that $r_{ij}=1$ if $d_M(x_i,x_j)\le b$, and $r_{ij}=0$ otherwise.

Thus, the objective function to be minimized is:
\begin{equation}
\frac{1}{n(n-1)}\sum_{i\neq j}\left[1+r_{ij}(d_M(x_i,x_j)-b)\right]_++\lambda\|M\|^2 \nonumber
\end{equation}
where $\|M\|^2$ is the metric regularizer, and $\lambda>0$ the regularization parameter.

From this, the study in \cite{DBLP:journals/corr/abs-1207-5437} derives the following generalization bound, that holds with probability at least $(1-\delta)$:
\begin{equation}
\begin{split}
 \boldsymbol{\varepsilon}(M, b)-\boldsymbol{\varepsilon}_{\mathcal{T}}(M, b)
 &\le\frac{4R_n(M)}{\sqrt{\lambda}}+\frac{4(3+2X_*/\sqrt{\lambda})}{\sqrt{n}}\\
 &+2(1+X_*/\sqrt{\lambda})
 \left(\frac{2\ln\left(\frac{1}{\delta}\right)}{n}\right)^{\frac{1}{2}}
 \nonumber
\end{split}
\end{equation}
where $\widehat{R}_n(M)$ denotes the empirical Rademacher complexity, defined as:
\begin{equation}
\widehat{R}_n(M) = \frac{1}{\lfloor\frac{n}{2}\rfloor}\mathbb{E}_{\sigma}\|\sum_{i=1}^{\lfloor\frac{n}{2}\rfloor}
\sigma_iX_{i({\lfloor\frac{n}{2}\rfloor}+i)}\|_*
\nonumber
\end{equation}
and $X_*=\sup_{x, x'\in\mathcal{X}}\|(x-x')(x-x')^{\top}\|_*$ ($X_*$ measures the diameter of the domain of $\mathcal{X}$). $\|\cdot\|_*$ denotes the dual norm for a given norm $\|\cdot\|$. %It is here defined by $\|X\|_*=\sup\{Tr(XY): X\in\mathcal{X}, \|X\|\le1\}$.

%As our objective function is one triplet comparison case, i.e., $(i, j)$ is more similar than $(i, k)$, we need to estimate a new generalization bound for our case.
%
%By using a similar technique with \cite{DBLP:journals/corr/abs-1207-5437}, we get the Generalization bound as:
%\begin{equation}
%\begin{split}
% &\boldsymbol{\varepsilon}(M,  b )-\boldsymbol{\varepsilon}_{\mathcal{T}}(M,  b )
% \le\frac{4\widehat{R}_n(M)}{\sqrt{\lambda}}\\
% &+\frac{4*(3+2X_*/\sqrt{\lambda})}{\sqrt{n}}+2(1+X_*/\sqrt{\lambda})
% \left(\frac{2\ln\left(\frac{1}{\delta}\right)}{n}\right)^{\frac{1}{2}}
%\end{split}
%\end{equation}

However, the study in \cite{DBLP:journals/corr/abs-1207-5437} does not consider the Schatten $p$-norm as a metric regularizer, and we provide here a bound on the Rademacher complexity of the Schatten $p$-norm regularizer for the case $p\ge1$. This bound complements the study in \cite{DBLP:journals/corr/abs-1207-5437} for the pair-comparison case. To do this, we first define the expectation of the empirical Rademacher complexity:
\begin{equation}
\begin{split}
& R(M) = \mathbb{E}_{\boldsymbol{z}}\widehat{R}_n(M) = \frac{1}{\lfloor\frac{n}{2}\rfloor}\mathbb{E}_{\boldsymbol{z}, \sigma}\|\sum_{i=1}^{\lfloor\frac{n}{2}\rfloor}
\sigma_iX_{i({\lfloor\frac{n}{2}\rfloor}+i)}\|_*
\end{split}
\nonumber
\end{equation}
Then, we have the following theorem.
\begin{theorem}
The Rademacher Complexity of our distances in the Schatten $p$-norm ($p\ge 1$) in the pair-comparison empirical loss case is bounded by:
\begin{equation}
R(M)\le\left\{\begin{array}{ll}
D^{\frac{1}{2}-\frac{1}{p}}\cdot\frac{2X_*}{\sqrt{n}}, & 2\le p; \\
D^{1-\frac{1}{p}}\cdot\frac{2X_*}{\sqrt{n}}, & 1\le p<2. \\
\end{array}\right.
\end{equation}
\end{theorem}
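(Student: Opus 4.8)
The plan is to reduce the matrix Rademacher average to a second-moment computation and then trade the Schatten $q$-norm against the Frobenius norm, which is where both the dimension factor and the split into $p\ge 2$ versus $1\le p<2$ originate. Let $q$ be the conjugate exponent, $\tfrac1p+\tfrac1q=1$, so that the dual norm is the Schatten $q$-norm, $\|\cdot\|_*=\|\cdot\|_{S_q}$. Write $N=\lfloor n/2\rfloor$, abbreviate the difference matrices $X_i:=(\phi(\boldsymbol{x}_i)-\phi(\boldsymbol{x}_{N+i}))(\phi(\boldsymbol{x}_i)-\phi(\boldsymbol{x}_{N+i}))^{\top}\in\mathbb{R}^{D\times D}$, each of which is $\Delta_i\Delta_i^{\top}$ with $\Delta_i\in\mathbb{R}^{D\times C}$ and hence of rank at most $C$, and set $S=\sum_{i=1}^{N}\sigma_i X_i$. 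Because the per-term bound $\|X_i\|_{S_q}\le X_*$ holds uniformly over $\mathcal{X}$ (it is a supremum over the domain), the outer data-expectation $\mathbb{E}_{\boldsymbol{z}}$ passes through harmlessly, so it suffices to control $\tfrac1N\,\mathbb{E}_\sigma\|S\|_{S_q}$.

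First I would apply Jensen's inequality (concavity of the square root) to pass to the second moment, $\mathbb{E}_\sigma\|S\|_{S_q}\le(\mathbb{E}_\sigma\|S\|_{S_q}^2)^{1/2}$, and then compare the Schatten $q$-norm with the Frobenius norm $\|\cdot\|_{S_2}$ via the elementary $\ell_q$–$\ell_2$ inequalities applied to the singular-value vector, which lives in $\mathbb{R}^{D}$. The crucial simplification is that, once everything is phrased through the Frobenius norm, the Rademacher signs are independent with zero mean, so all cross terms cancel in expectation and $\mathbb{E}_\sigma\|S\|_{S_2}^2=\sum_{i=1}^{N}\|X_i\|_{S_2}^2$. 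Thus the whole computation collapses onto the single-term quantities $\|X_i\|_{S_2}$, which I then bound against $\|X_i\|_{S_q}\le X_*$.

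When $p\ge 2$ we have $q\le 2$, and both comparisons point the convenient way: $\|S\|_{S_q}\le D^{1/q-1/2}\|S\|_{S_2}$ while $\|X_i\|_{S_2}\le\|X_i\|_{S_q}\le X_*$, giving $\sum_i\|X_i\|_{S_2}^2\le N X_*^2$. Chaining this with Jensen produces $\tfrac1N\,\mathbb{E}_\sigma\|S\|_{S_q}\le D^{1/q-1/2}X_*/\sqrt{N}=D^{1/2-1/p}X_*/\sqrt{N}$, and the final step $\lfloor n/2\rfloor\ge n/4$ converts $1/\sqrt{N}$ into $2/\sqrt{n}$, matching the first branch. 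When $1\le p<2$ we have $q>2$ and the norm comparisons reverse, so the dimension factor must now be paid when controlling each rank-($\le C$, hence $\le D$) term $\|X_i\|_{S_2}$ by $\|X_i\|_{S_q}=X_*$; routing this factor through the same Jensen/orthogonality skeleton gives a bound of the form $D^{\alpha}X_*/\sqrt{N}$, with the exponent $\alpha$ fixed by the $\ell_2$–$\ell_q$ comparison on the multidimensional singular-value spectrum, and the floor cleanup again yields the $2/\sqrt{n}$ scaling.

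I expect the main obstacle to be exactly this norm-comparison step in the regime $q>2$ (i.e. $1\le p<2$). Unlike the rank-one difference matrices of the original pair-comparison analysis of \cite{DBLP:journals/corr/abs-1207-5437}, where all Schatten norms of a single term coincide and no dimension penalty appears, here $X_i=\Delta_i\Delta_i^{\top}$ is genuinely of rank up to $C$, so its singular-value vector is multidimensional and the dimension factor is unavoidable; the delicate accounting is to arrange the comparison so that the emerging power of $D$ is exactly the stated $D^{1-1/p}$ rather than a different exponent. The remaining ingredients — Jensen, the cancellation of the Rademacher cross terms, and replacing $\lfloor n/2\rfloor$ by $n/4$ — are routine.
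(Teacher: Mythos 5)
Your first branch ($p\ge 2$, hence $q\le 2$) is correct and is essentially the paper's own argument: the spectral H\"older comparison $\|S\|_{S_q}\le D^{1/q-1/2}\|S\|_{S_2}$ followed by the Frobenius-norm Rademacher bound and $\lfloor n/2\rfloor\ge n/4$; the only cosmetic difference is that you re-derive the Frobenius step via Jensen and cancellation of the Rademacher cross terms, where the paper simply cites it from \cite{DBLP:journals/corr/abs-1207-5437}.

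The gap is in the second branch, and it is exactly where you flagged your own uncertainty. Your skeleton makes the dimension enter \emph{per term}: for $q>2$ the sum-level comparison $\|S\|_{S_q}\le\|S\|_{S_2}$ is free, and the price is paid in $\|X_i\|_{S_2}\le \mathrm{rank}(X_i)^{1/2-1/q}\|X_i\|_{S_q}\le D^{1/2-1/q}X_*$, which yields the exponent $1/2-1/q=1/p-1/2$, not the stated $1/q=1-1/p$. The two agree only at $p=4/3$: for $p\in(4/3,2)$ your bound is actually \emph{stronger} than the theorem, but for $p\in[1,4/3)$ it is strictly weaker --- at $p=1$ you get $\sqrt{D}\cdot 2X_*/\sqrt{n}$ where the theorem claims $2X_*/\sqrt{n}$ --- so the statement is not established on that range, and no rearrangement of the Jensen/orthogonality route fixes this, since under your (Schatten-$q$) reading of $X_*$ the per-term comparison is the only place $D$ can enter. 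The paper instead puts the dimension factor on the \emph{whole sum}: it bounds $\|\lambda(S)\|_q\le D^{1/q}\lambda_{\max}(S)\le D^{1/q}\|\lambda(S)\|_2$ (the step it concedes ``may be loose''; indeed $\|\lambda\|_q\le\|\lambda\|_2$ already holds for $q\ge 2$) and then invokes the Frobenius bound $\tfrac{1}{\lfloor n/2\rfloor}\mathbb{E}\|S\|_{S_2}\le 2X_*/\sqrt{n}$ with the per-term estimate $\|X_i\|_{S_2}\le X_*$ carried over unchanged from the rank-one setting of \cite{DBLP:journals/corr/abs-1207-5437}, with no rank factor; that is what produces exactly $D^{1-1/p}$. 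Your observation that here $X_i=\Delta_i\Delta_i^{\top}$ has rank up to $C$ is sharp --- it shows the paper's transfer of that per-term estimate is only automatic if $X_*$ is read as the Frobenius diameter (under which reading your second branch closes with no $D$ factor at all, and the stated $D^{1-1/p}$ is mere slack) --- but insisting on the Schatten-$q$ reading, as you do, changes the exponent to $\min(C,D)^{1/p-1/2}$ and proves a different statement than the theorem's.
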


\begin{proof}
The dual norm of the Schatten $p$-norm is the Schatten $q$-norm, where $q$ satisfies $\frac{1}{p}+\frac{1}{q}=1$. Let us first assume that $p\ge 2$; then $1\le q\le 2$. Assume $\lambda_{1, \sigma, i}, \ldots, \lambda_{d, \sigma, i}$ are the eigenvalues of the matrix $\sum_{i=1}^{\lfloor\frac{n}{2}\rfloor} \sigma_iX_{i({\lfloor\frac{n}{2}\rfloor}+i)}$, we have
\begin{equation} \label{gb_1}
\begin{split}
R(M)= & \frac{1}{\lfloor\frac{n}{2}\rfloor}\mathbb{E}_{\boldsymbol{z}, \sigma}\|\sum_{i=1}^{\lfloor\frac{n}{2}\rfloor}
\sigma_iX_{i({\lfloor\frac{n}{2}\rfloor}+i)}\|_q\\
= & \frac{1}{\lfloor\frac{n}{2}\rfloor}\mathbb{E}_{\boldsymbol{z}, \sigma}[\sum_{k=1}^D(\lambda_{k,\sigma, i})^{q}]^{\frac{1}{q}}\\
\overset{H\ddot{o}lder}{\le} & \frac{1}{\lfloor\frac{n}{2}\rfloor}\mathbb{E}_{\boldsymbol{z}, \sigma}
\left\{\left[\sum_{k=1}^D((\lambda_{k,\sigma, i})^q)^{\frac{2}{q}}\right]^{\frac{q}{2}}\cdot D^{\frac{2-q}{2}}\right\}^{\frac{1}{q}}\\
= & \frac{1}{\lfloor\frac{n}{2}\rfloor}\mathbb{E}_{\boldsymbol{z}, \sigma}
\left[\sum_{k=1}^D(\lambda_{k,\sigma, i})^2\right]^{\frac{1}{2}}\cdot D^{\frac{2-q}{2q}}\\
\overset{\textrm{\cite{DBLP:journals/corr/abs-1207-5437}}}{\le} & D^{\frac{2-q}{2q}}\cdot\frac{2X_*}{\sqrt{n}}
\end{split}
\end{equation}

Let us now assume that $1\le p<2$; then $q>2$. Let $\lambda_{\textrm{max}, \sigma, i} = \max\{\lambda_{1, \sigma, i}, \ldots, \lambda_{d, \sigma, i}\}$.
\begin{equation} \label{gb_2}
\begin{split}
R(M)= & \frac{1}{\lfloor\frac{n}{2}\rfloor}\mathbb{E}_{\boldsymbol{z}, \sigma}[\sum_{k=1}^d(\lambda_{k,\sigma, i})^{q}]^{\frac{1}{q}} \le\frac{1}{\lfloor\frac{n}{2}\rfloor}\mathbb{E}_{\boldsymbol{z}, \sigma}[d^{\frac{1}{q}}\cdot
\lambda_{\textrm{max}, \sigma, i}]\\
= & d^{\frac{1}{q}}\cdot\frac{1}{\lfloor\frac{n}{2}\rfloor}\mathbb{E}_{\boldsymbol{z}, \sigma}[
\lambda_{\textrm{max}, \sigma, i}] \\
\le &d^{\frac{1}{q}}\cdot\frac{1}{\lfloor\frac{n}{2}\rfloor}\mathbb{E}_{\boldsymbol{z}, \sigma}
\left[\sum_{k=1}^d(\lambda_{k,\sigma, i})^2\right]^{\frac{1}{2}}\\
\overset{\textrm{\cite{DBLP:journals/corr/abs-1207-5437}}}{\le} & d^{\frac{1}{q}}\cdot\frac{2X_*}{\sqrt{n}}
\end{split}
\end{equation}
It has to be noted here that, as we are using the union bound in this case, the result in the $1\le p<2$ case may be loose.

The conclusion is finally obtained by summarizing Equations (\ref{gb_1}) and (\ref{gb_2}). \hspace{0cm} $\Box$
\end{proof}

Armed with this result, we can now state a generalization bound for with Schatten $p$-norm in the pair-comparison case.
\begin{theorem}
$\forall 0<\delta<1$, with probability $(1-\delta)$, we have that
\begin{equation}
\begin{split}
 \boldsymbol{\varepsilon}(M, b)-\boldsymbol{\varepsilon}_{\mathcal{T}}(M, b)
 &\le\frac{8D^{\alpha(p)-\frac{1}{p}}X_*}{\sqrt{n\lambda}}+\frac{4(3+2X_*/\sqrt{\lambda})}{\sqrt{n}}\\
 &+2(1+X_*/\sqrt{\lambda})
 \left(\frac{2\ln\left(\frac{1}{\delta}\right)}{n}\right)^{\frac{1}{2}}
 \nonumber
\end{split}
\end{equation}
Here $\alpha(p)=\left\{\begin{array}{ll}
\frac{1}{2}, & 2\le p; \\
1, & 1\le p<2. \\
\end{array}\right.$
\end{theorem}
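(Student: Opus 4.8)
The plan is to combine the generalization bound of \cite{DBLP:journals/corr/abs-1207-5437}, recalled in the text above, with the Rademacher-complexity estimate of Theorem 1, using the latter as a black box. The cited bound has the shape
\[
\boldsymbol{\varepsilon}(M,b)-\boldsymbol{\varepsilon}_{\mathcal{T}}(M,b)\le \frac{4R(M)}{\sqrt{\lambda}}+\frac{4(3+2X_*/\sqrt{\lambda})}{\sqrt{n}}+2(1+X_*/\sqrt{\lambda})\left(\frac{2\ln(1/\delta)}{n}\right)^{\frac{1}{2}},
\]
and holds with probability at least $1-\delta$. Only the first summand depends on the chosen regularizer; the second and third summands are universal and will carry over verbatim to our statement.

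First I would substitute the Schatten $p$-norm estimate of Theorem 1 into the leading term. Theorem 1 gives $R(M)\le D^{\frac{1}{2}-\frac{1}{p}}\cdot\frac{2X_*}{\sqrt{n}}$ when $p\ge 2$ and $R(M)\le D^{1-\frac{1}{p}}\cdot\frac{2X_*}{\sqrt{n}}$ when $1\le p<2$. Introducing $\alpha(p)=\tfrac{1}{2}$ for $p\ge 2$ and $\alpha(p)=1$ for $1\le p<2$, both cases collapse into the single expression $R(M)\le D^{\alpha(p)-\frac{1}{p}}\cdot\frac{2X_*}{\sqrt{n}}$. Dividing by $\sqrt{\lambda}$ and multiplying by $4$ then yields
\[
\frac{4R(M)}{\sqrt{\lambda}}\le\frac{8D^{\alpha(p)-\frac{1}{p}}X_*}{\sqrt{n\lambda}},
\]
which is exactly the first term of the claimed bound. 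Inserting this into the displayed inequality above and leaving the remaining two terms untouched produces the desired statement, valid with probability at least $1-\delta$.

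The point that requires care, and hence the hard part, will be matching the Rademacher quantity appearing in the cited bound with the quantity $R(M)$ that Theorem 1 actually controls. The bound of \cite{DBLP:journals/corr/abs-1207-5437} is phrased through the empirical Rademacher complexity $\widehat{R}_n(M)$, whereas Theorem 1 bounds its expectation $R(M)=\mathbb{E}_{\boldsymbol{z}}\widehat{R}_n(M)$. I would argue that the passage from the empirical to the expected complexity is already absorbed into the concentration step that generated the $\ln(1/\delta)$ term of the cited bound, so that the expected-complexity estimate of Theorem 1 may be used without any additional penalty; alternatively, one can invoke a standard McDiarmid argument to replace $\widehat{R}_n(M)$ by $R(M)$ at a cost dominated by the terms already retained. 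Once this identification is justified, the rest is the direct algebraic substitution described above.
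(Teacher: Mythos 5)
Your proposal is correct and matches the paper's (implicit) derivation: the paper states this theorem with no separate proof precisely because it follows by substituting the Rademacher bound of Theorem 1 into the leading term of the cited generalization bound, exactly the plug-in argument you give. Your worry about empirical versus expected complexity is in fact a non-issue, since the bound in \cite{DBLP:journals/corr/abs-1207-5437} is stated for the expected quantity $R(M)=\mathbb{E}_{\boldsymbol{z}}\widehat{R}_n(M)$ --- the paper's mixing of the notations $R_n(M)$ and $\widehat{R}_n(M)$ is merely sloppy --- so no additional McDiarmid step is required.
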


%\begin{equation}
%R(M)\le\left\{\begin{array}{ll}
%D^{\frac{1}{2}-\frac{1}{p}}\cdot\frac{2X_*}{\sqrt{n}}, & 2\le p; \\
%D^{1-\frac{1}{p}}\cdot\frac{2X_*}{\sqrt{n}}, & 1\le p<2. \\
%\end{array}\right.
%\end{equation}
\begin{figure*}[htbp]
%\begin{minipage}[htbp]{\textwidth}
\begin{minipage}[b]{.24\textwidth}
\centering
{\small
\begin{displaymath}
\left( \begin{array}{cccc}
M_{11} & M_{12} & \ldots & M_{1C}\\
M_{12} & M_{22} & \ldots & M_{2C}\\
\vdots & \vdots & \ddots & \vdots\\
M_{1C} & M_{2C} & \ldots & M_{CC}\\
\end{array} \right)
\end{displaymath}}
%\caption{KDML}
\end{minipage}
\begin{minipage}[b]{.24\textwidth}
\centering
{\small
\begin{displaymath}
\left( \begin{array}{cccc}
M_{11} & 0 & \ldots & 0\\
0 & M_{22} & \ldots & 0\\
\vdots & \vdots & \ddots & \vdots\\
0 & 0 & \ldots & M_{CC}\\
\end{array} \right)
\end{displaymath}}
%\caption{CPm}
\end{minipage}
\begin{minipage}[b]{.24\textwidth}
\centering
{\small
\begin{displaymath}
\left( \begin{array}{cccc}
M_{11} & 0 & \ldots & 0\\
0 & M_{11} & \ldots & 0\\
\vdots & \vdots & \ddots & \vdots\\
0 & 0 & \ldots & M_{11}\\
\end{array} \right)
\end{displaymath}}
%\caption{CPs}
\end{minipage}
\begin{minipage}[b]{.24\textwidth}
\centering
{\small
\begin{displaymath}
\left( \begin{array}{cccc}
I_{D\times D} & 0 & \ldots & 0\\
0 & I_{D\times D} & \ldots & 0\\
\vdots & \vdots & \ddots & \vdots\\
0 & 0 & \ldots & I_{D\times D}\\
\end{array} \right)
\end{displaymath}}
%\caption{Euclidean}
\end{minipage}
\caption{Features' correlation structure (from left to right: KDML, CPm, CPs, Euclidean).} \label{fea_corr}
\end{figure*}

\section{Related Work} \label{sec_5}
\subsection{Metric Learning} From the seminal work of \cite{xing2002distance}, the majority of studies in metric learning focuses on numerical data. An optimal metric is usually learned, from some labeled information, in the family of Mahalanobis distances. Several metric learning methods have lead to classifiers significantly better than the one based on standard metrics as the Euclidean distance (i.e. without learning). Among such methods we can cite the \emph{large margin nearest neighbour} (LMNN) approach\cite{weinberger2006distance} that first determines the nearest neighbors of each point in the Euclidean space, then tries to move the point closer to its neighbors of the same classe while pushing it away from neighbors of other classes. The \emph{information theoretical metric learning} (ITML) \cite{davis2007information} tries to minimize the relative entropy between two multivariate Gaussian distributions under distance constraints. The \emph{maximum-eigenvalue metric learning} \cite{ying2012distance,7529190} uses the popular Frank-Wolfe algorithm to formulate the problem as a constrained maximum-eigenvalue problem, which avoids the computation of the full eigen-decomposition in each iteration.

Metric learning on numerical data usually involves a linear transformation of the original Euclidean space. Some studies, however, rely on non-linear transformations, as the $\chi^2$-LMNN and GB-LMNN  approaches \cite{kedem2012non} that respectively make use of the $\chi^2$ distance and regression trees. Other non-linear transformations involve the use of kernels \cite{globerson2007visualizing,torresani2006large}. \emph{Hamming distance metric learning} \cite{norouzi2012hamming} has recently been proposed to learn a mapping from real-valued inputs into binary ones, with which the hash function can fully utilized to enable large scalability.

On learning the distance between categorical data, \cite{Boriah_similaritymeasures} have conducted an extensive comparison between various unsupervised measures. The closest work to ours is \cite{Cheng20041471}, which is to measure correlation structure among each feature, and \cite{kernel_density_metric_learning}, which considers the full class-features' correlation structure. However, the first neglects the potential correlations between the features and both of their scalability are questionable, as they individually optimize $d\cdot n_d^2$ and $(D\cdot C)\times (D\cdot C)$ matrixes.

{We notice there are two algorithms proposed recently to learn metric on categorical data. The heterogeneous metric learning with hierarchical couplings~(HELIC)~\cite{HELIC} mainly focuses on the attribute-to-value and attribute coupling framework. Their insufficient utilization on the labels leads to degraded performance. Also, the DM3 method~\cite{DM3} considers only the frequency information within the attributes. The lack of class label incorporation makes the performance unappealing again. We show this in the experimental part.}
%
%%Compared to the above fruitful results of similarity learning methods on numerical data, the corresponding learning strategies on categorical data have not attracted tremendous attentions. Most of the existing work are applied under the unsupervised learning scenario.
%\emph{Hamming distance metric learning} \cite{norouzi2012hamming} has recently been proposed to learn a mapping that projects a real-valued input vector onto a binary vector, while trying to preserve the semantic structure. Our approach proceeds in the opposite direction.

On the theoretical generalization guarantees, \cite{jin2009regularized} uses the uniform stability concept to firstly bound the deviation from \emph{true risk} to \emph{empirical risk}, under the Frobenious norm case. \cite{bian2011learning} gives the generalization bound without regularizers, with strong assumptions on the points' distribution. \cite{bellet2012robustness} has shown that robustness \cite{xu2012robustness} is necessary and sufficient to be generalized well. \cite{DBLP:journals/corr/abs-1207-5437} uses the notion of Rademacher complexity to derive the generalization bound for several regularizers. Our generalization bound on the Schatten $p$-norm ($p\ge1$) is a supplement to this work.
%
%The \emph{kernel density metric learning} (KDML) \cite{kernel_density_metric_learning} also uses the VDM projection for the categorical data, but concatenates all the projected feature vector into one $D\cdot C$-length vector. Correspondingly, the metric used $M$ is of size $(D\cdot C)\times(D\cdot C)$. Compared to our metric $M$ of size $D\times D$, the KDML approach comes at a heavier computational cost. More importantly, the element in $M$ delivers trivial literature understanding for the structure. %For instance, the correlation between two classes's correlation on different features' different values is trivial in real application. Actually, the correlation on different features in enough for our application.
%
%In our CPML framework based on the CPm and CPs distances, we focus on feature correlations within the same class, i.e., the correlations among different classes are to be simplified. This assumption usually holds for real applications. \textcolor{red}{Xuhui, I don't understand the following example. Could you make it clearer?} For instance, it is hard to correlate the bankrupted credit card holder's living suburb with the good record consumer's marriage status.

{\subsection{Distance between categorical data}
In addition to the VDM method we have used, there are other existing methods to quantify the distance between categorical data in the literature. Among all those methods, Hamming distance~\cite{hamming1950error} is widely known for its intuitive understanding and simplicity to implementation. The idea is to treat the same value in the categorical data as $1$, and $0$ otherwise. However, the Hamming distance lacks the ability to model dependence within the features and also the potential connection to the class information. 

The association-based metric proposed by~\cite{le2005association} uses an indirect probabilistic method. Particularly, the metric is estimated by the sum of distances between conditional probability density function~(cpdf) of other attributes given these two values, i.e., $D(x_{id}, x_{jd})=\sum_{d'}\psi(\text{cpdf}(X|x_{id})+\text{cpdf}(X|x_{jd}))$, where $\psi(\cdot, \cdot)$ is the distance function between two probability density functions and can be used as the Kullback-Leibler divergence. However, the distance is $0$ if all the attributes are independent of each other. 

The context-based metric~\cite{ienco2009context,ienco2012context} is determined by a measure of symmetrical uncertainty~(SU) between pairwise attributes. Particularly, SU is calculated as $\text{SU}(X_{d_1}, X_{d_2})=2\frac{\text{IG}(X_{d_1}|X_{d_2})}{H(X_{d_1})+H(X_{d_2})}$, where $H(X_{d_1})$ is the entropy of attribute $d_1$ and $\text{IG}(X_{d_1}|X_{d_2})$ is the information gain. The metric between two attribute value is determined by further usage of SU. Similar as the association-based metric, the context-based metric can not work well when these attributes are independent. 

Fig. \ref{fea_corr} displays four different feature correlation structures. As we can see, the CP-m and CP-s distances are in between the full correlation considered in KDML and the simple one used in the Euclidean distance.}

{\subsection{Computational complexity}
The compuational complexity of our captegorical project method is determined by two parts: calculating the VDM projection and the metric learning part. In calculating the VDM projection, we calculate the corresponding projection of all the $D$ categorical features of $n$ nodes. For each feature value, we use the corresponding class information. Thus, the total computational cost is $\mathcal{O}(nDCs_{\text{max}})$, where $s_{\text{max}}$ is the maximum number of values in one feature. 

In calculating the computational complexity of metric learning, we taken the matrix of $A$ to be fixed as it is calculated in advance. Thus, for a given iteration number $L$ and length of contraint set $T$, the computational time scales to $\mathcal{O}(LTCD^3)$ for CPML-m~(i.e. metric learning with categorical projected multi-distance) method (and $\mathcal{O}(LTD^3)$ for CPML-s~(i.e. metric learning with categorical projected single-distance) method). Here the term of $\mathcal{O}(D^3)$ refers to the spectral decomposition of $M$ such that we can manually make it positive semi-definite. In most of the other approaches (e.g. KDML), the computation complexity would scales to $\mathcal{O}(C^3D^3)$, which is almost infeasible when the number of class is large.

As a result, the computational complexity for the CPML-m and CPML-s method can be summarized as $\mathcal{O}(nDCs_{\text{max}}+LTCD^3)$ and $\mathcal{O}(nDCs_{\text{max}}+LTD^3)$.
}

\section{Experiments} \label{sec_exps}

The performance of our CPML framework is validated by experiments on synthetic dataset as well as real-world datasets. On the synthetic dataset, we mainly test the properties of CPML, e.g., the influence of the number of features, the presence of noisy features, and the running time. The real-world datasets are mainly used for evaluating the performance of different approaches. Particularly, we individually implement three baseline methods, i.e. LMNN, KDML and DM3, to the best of our understanding. For HELIC, we use the authors' kindly provided implementation. 

Further, we use {\it triplet comparison accuracy} and {\it classification accuracy} to assess the performance of the methods. The triplet constraints are built by considering that any pair ($\boldsymbol{x}_i$,$\boldsymbol{x}_j$) from the same class should have a lower distance than any pair ($\boldsymbol{x}_i$,$\boldsymbol{x}_k$) from different classes. For the \emph{triplet comparison accuracy}, we also construct these triplets on the test data and evaluate the proportion correctly predicted. For the \emph{classification accuracy}, we use Nearest Neighbor classification as the default classifier. Further, we randomly divide the data into $10$ parts and set the ratio of training:validation:testing as $6:2:2$. The trade-off parameter $\lambda$ is set as ranging from $10^{-4}$ to $10^4$. For each scenario, the experiments are run 50 times and averaged; the summary statistics (mean, standard deviation) are reported.
\begin{figure*}[htbp]
\centering
\includegraphics[width =  0.4\textwidth]{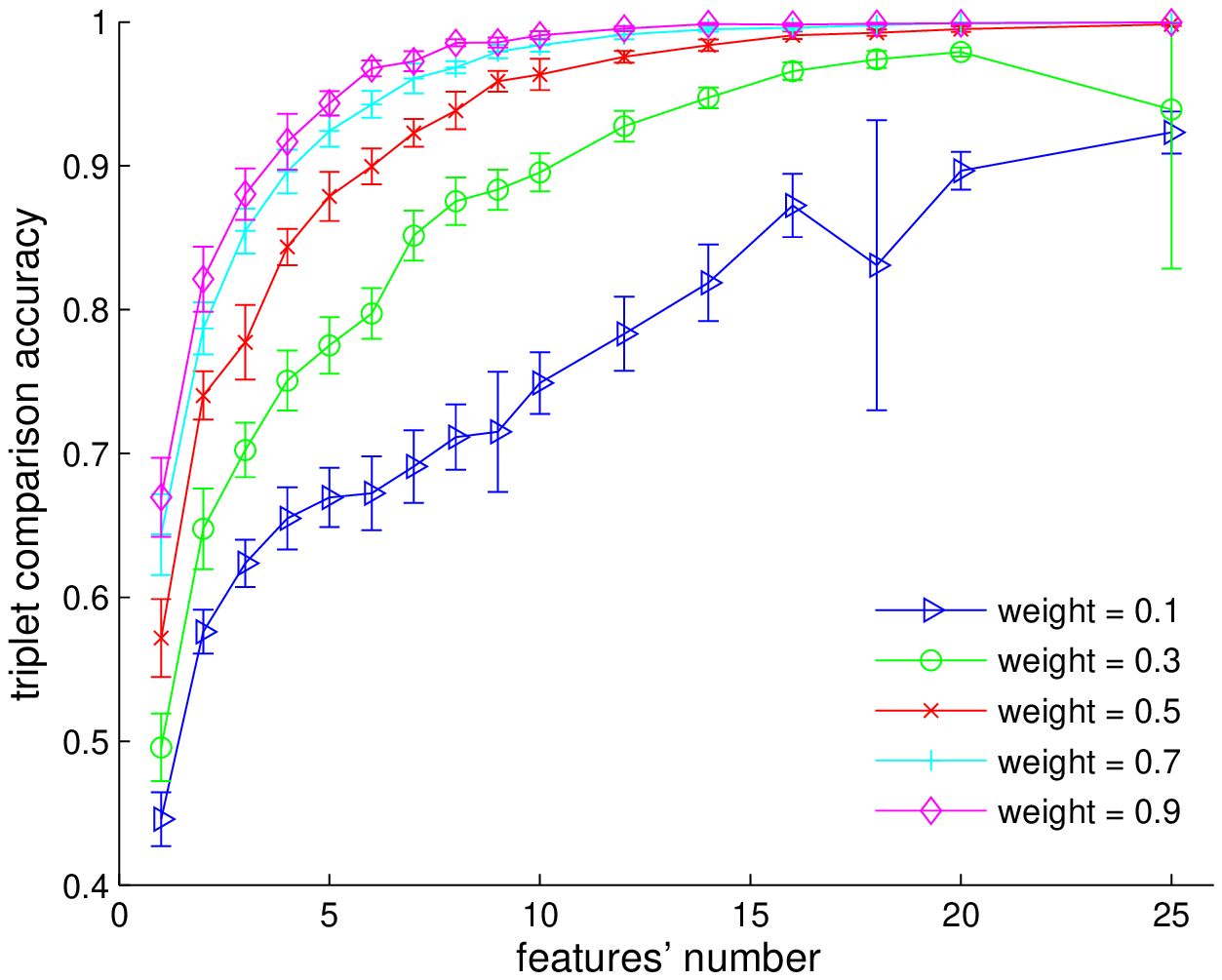}
\includegraphics[width =  0.4\textwidth]{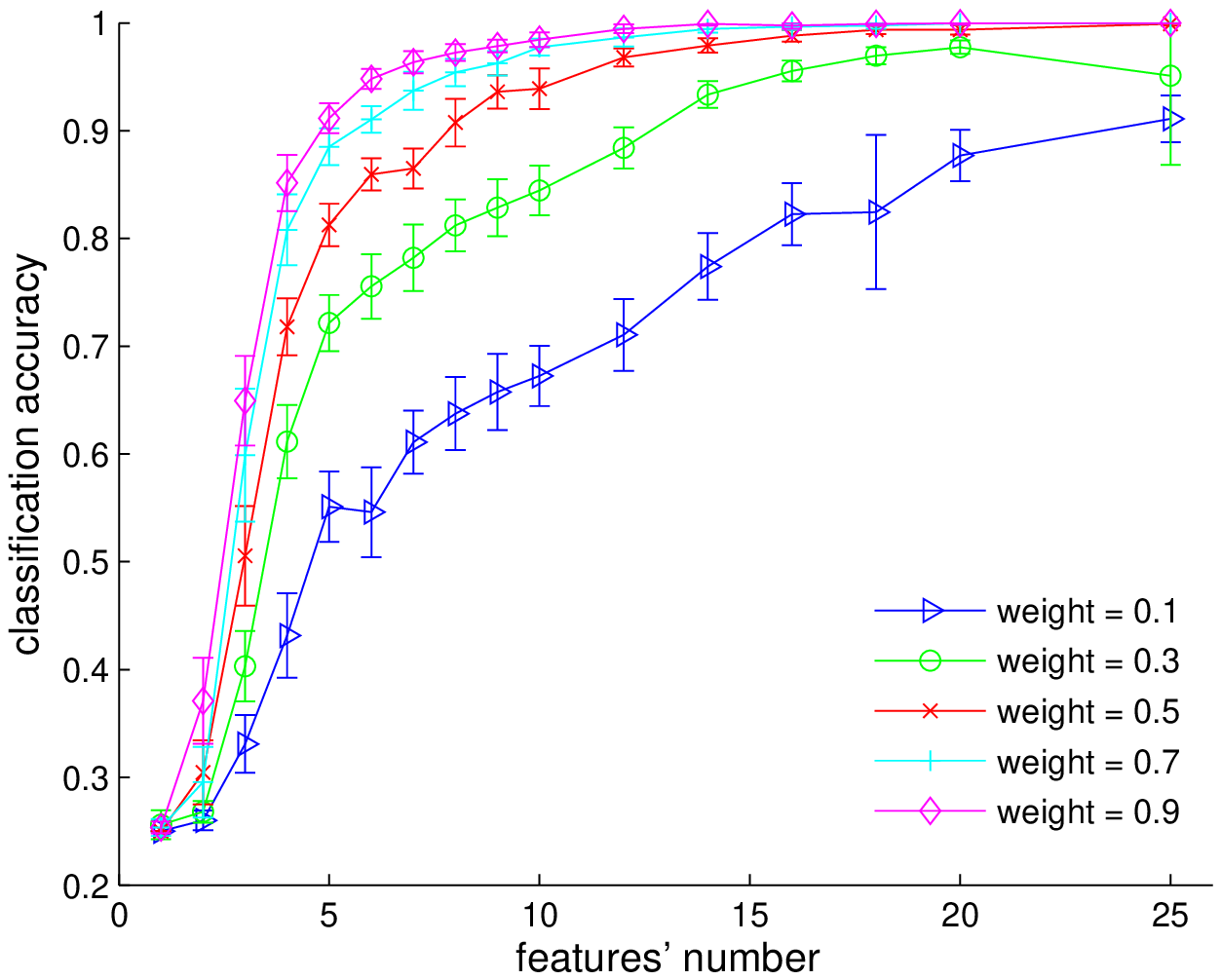}
\includegraphics[width =  0.4\textwidth]{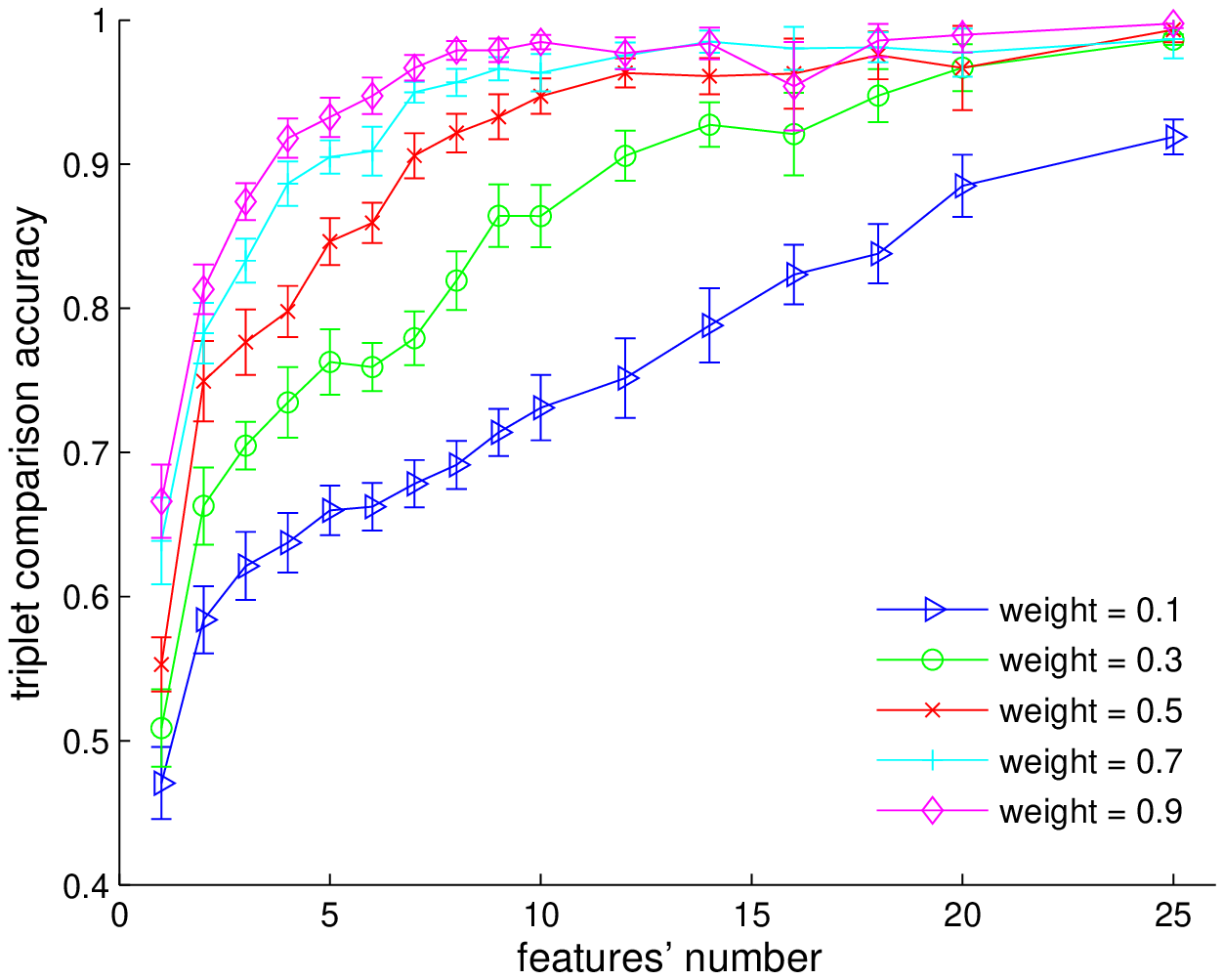}
\includegraphics[width =  0.4\textwidth]{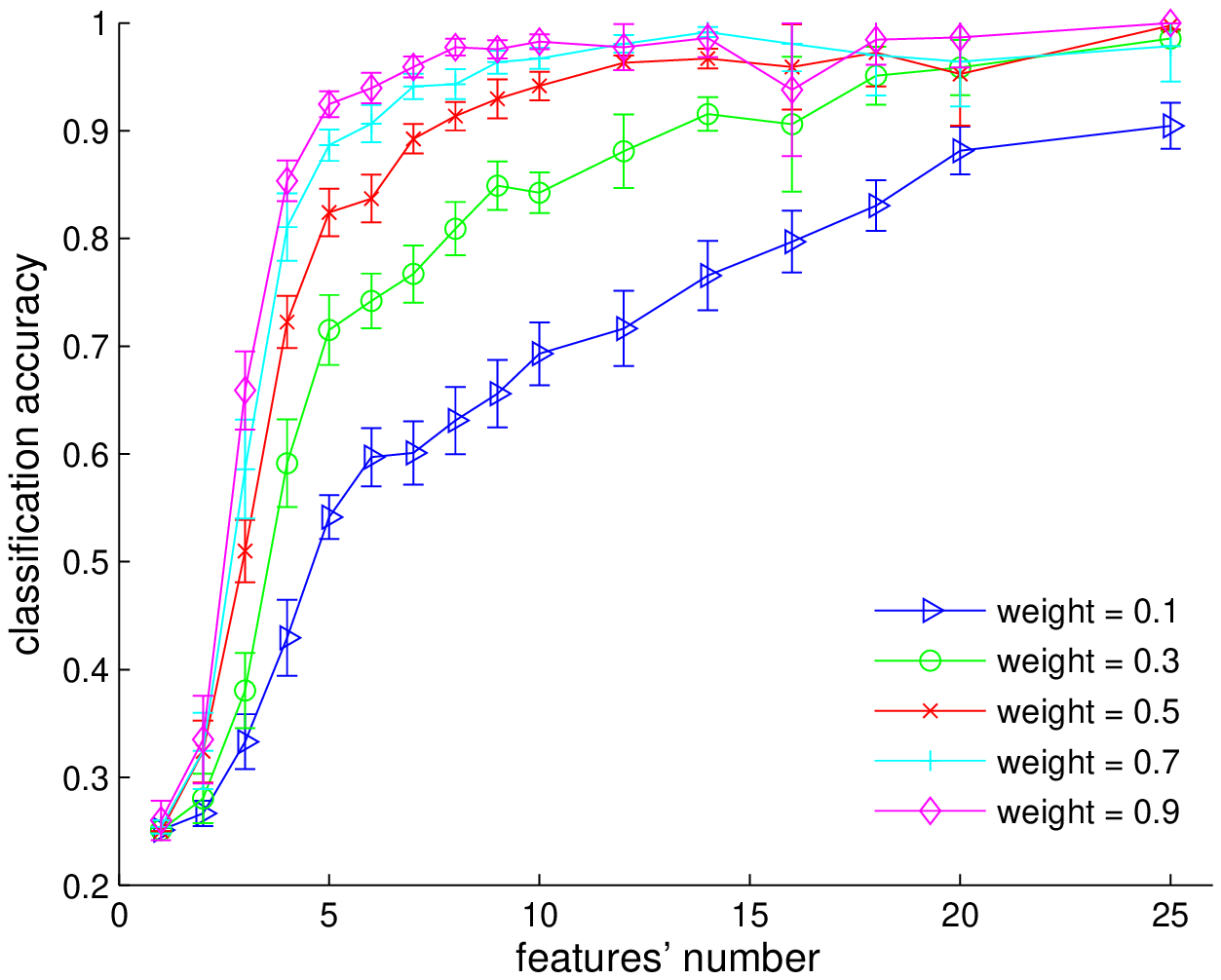}
\caption{Synthetic data classification performance (with $4$ classes). (Top: CPML-s; bottom: CPML-m.)}
\label{syn_result}
\end{figure*}
\begin{figure*}[htbp]
\centering
\includegraphics[width =  0.4\textwidth]{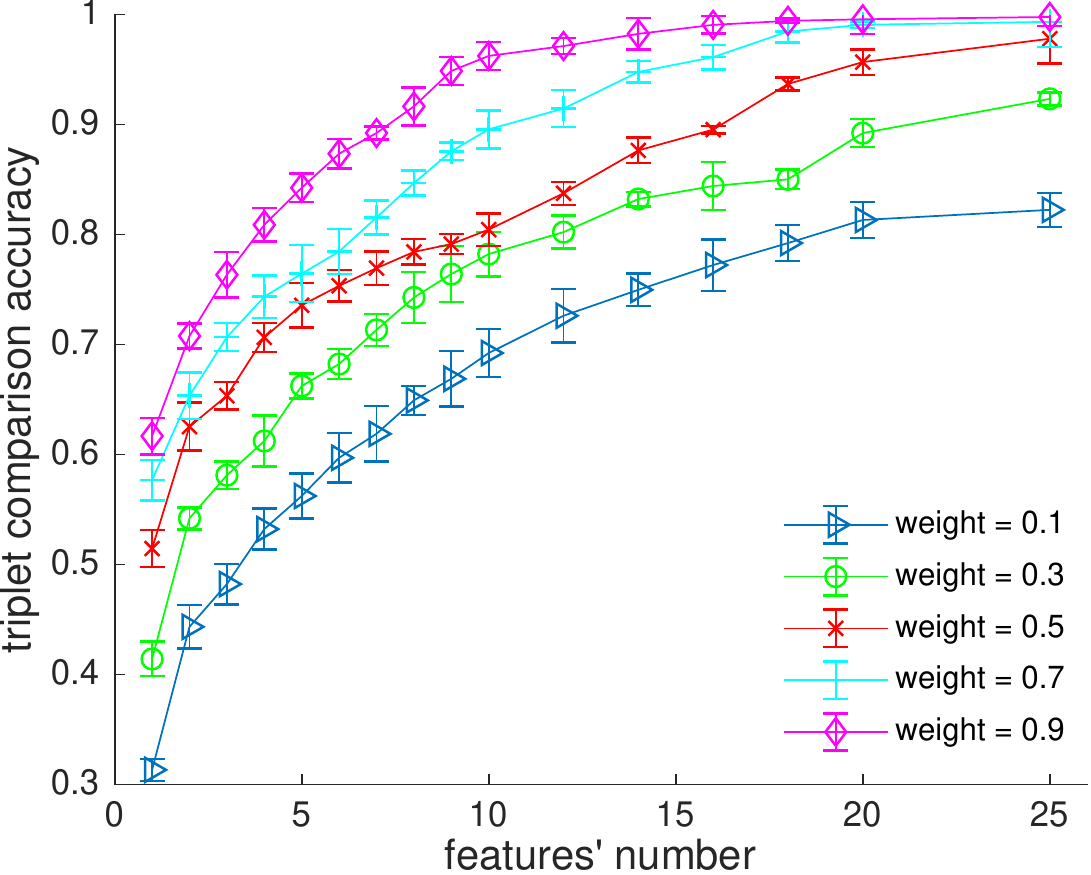}
\includegraphics[width =  0.4\textwidth]{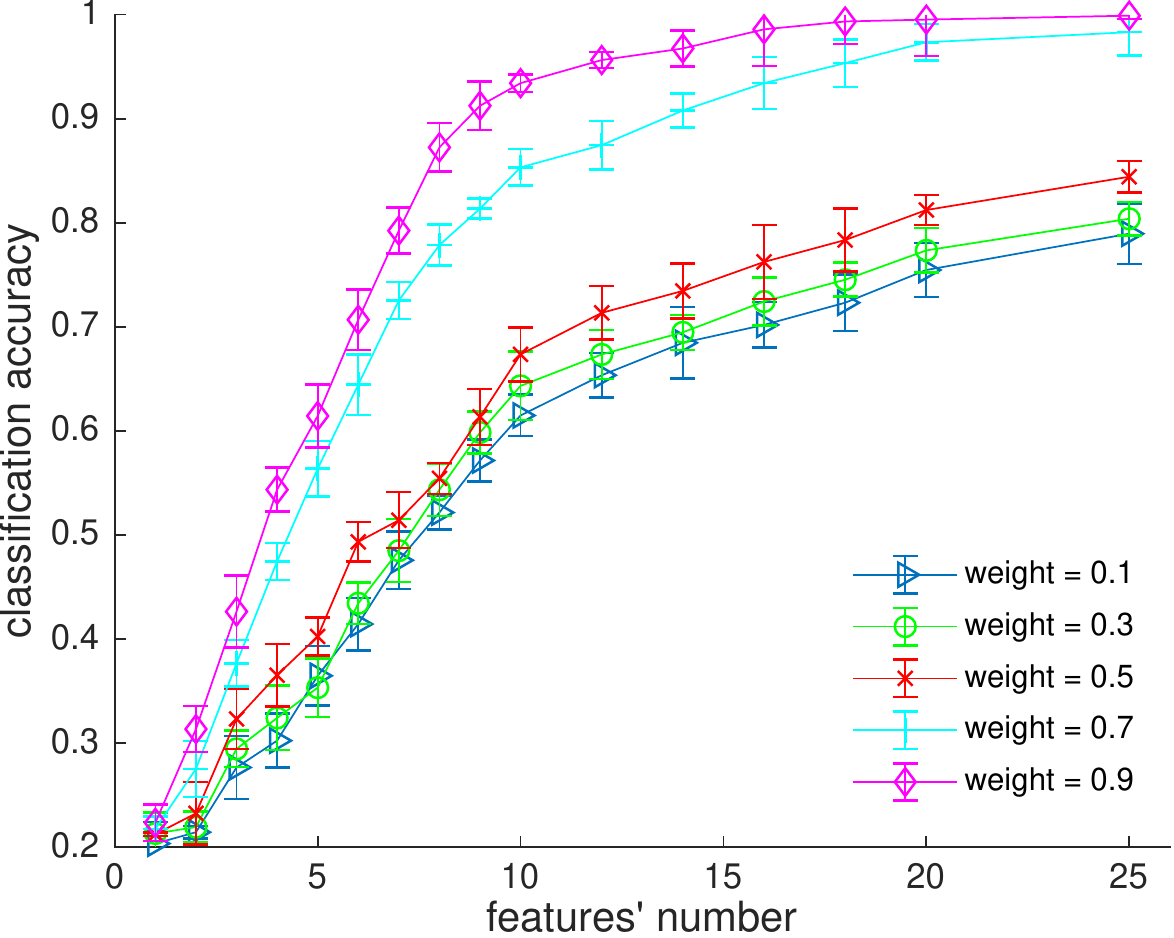}
\includegraphics[width =  0.4\textwidth]{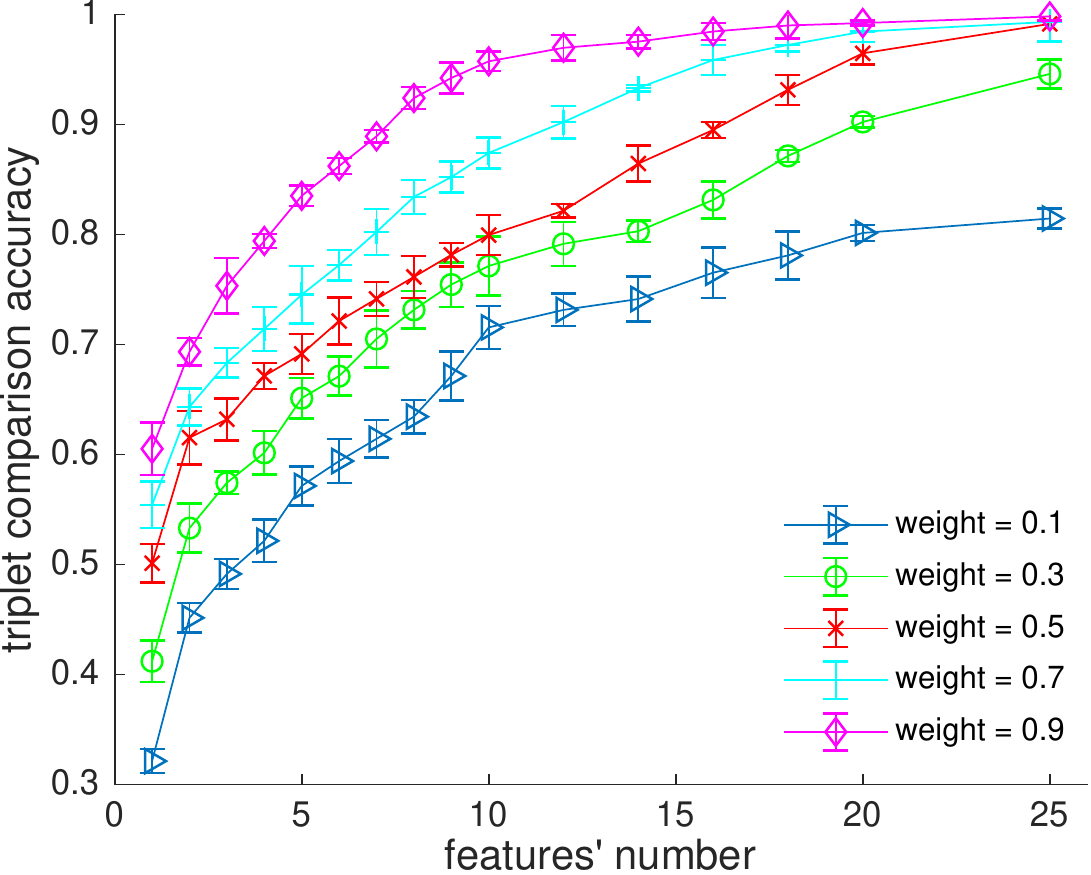}
\includegraphics[width =  0.4\textwidth]{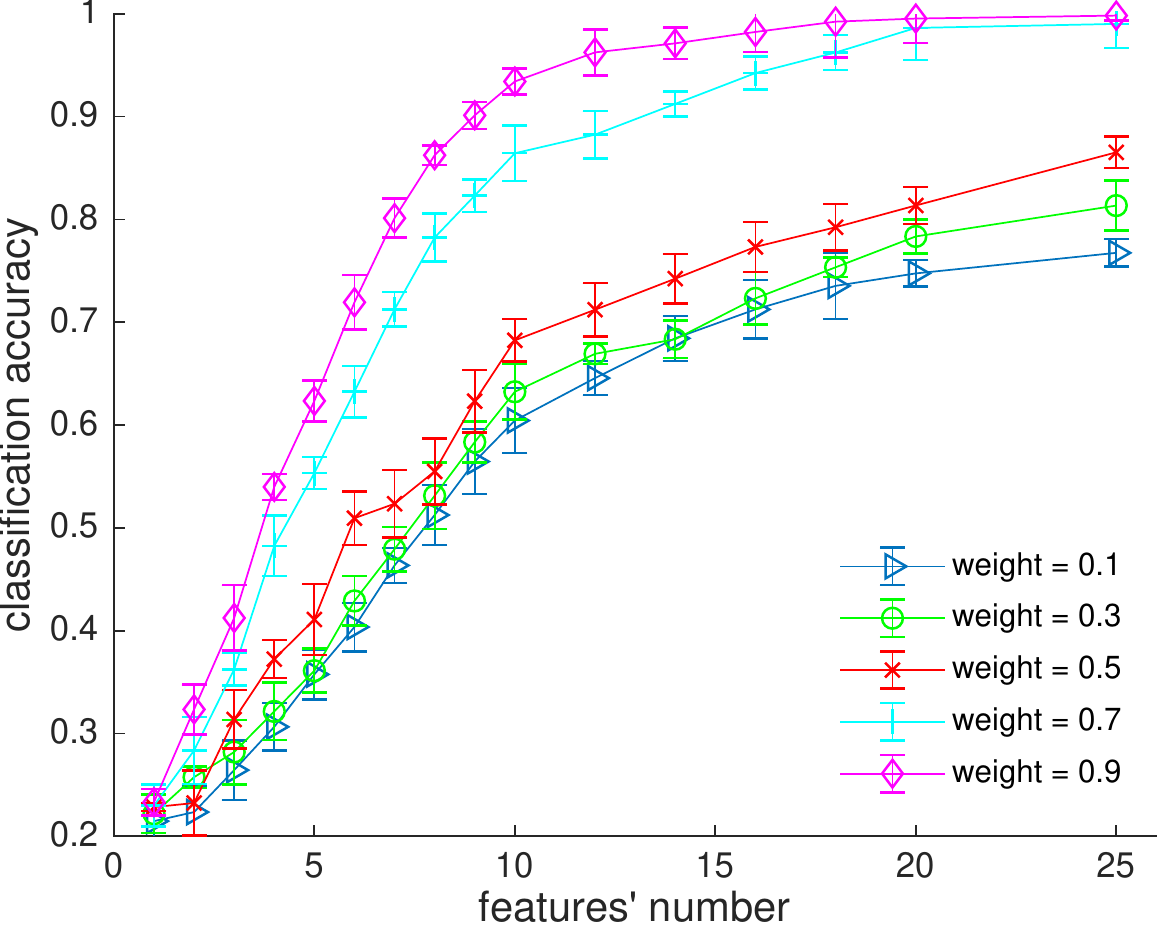}
{\caption{Synthetic data classification performance (with $20$ classes). (Top: CPML-s; bottom: CPML-m.)}}
\label{syn_result_class_200}
\end{figure*}

\subsection{Synthetic data}
The synthetic data is generated as follows. For each categorical feature, their values are generated by a multinomial distribution, which is parameterized by uniform random variables. To distinguish different classe, we manually add a weight to one of the component for each class. After the parameter normalization, the generated feature values can ensure favored values in each feature for different classes. Thus, a total of $D$ multinomial distributions are demanded for this generation.

%For each feature, the two data points' corresponding values will share the same multinomial distribution if they belong to the same class. For different classes' multinomial distributions, we also manually add a weight to one of the distributions' parameters. This can ensure favored values in each feature for different classes. Thus, a total of $D$ multinomial distributions are needed for this generation.

More specifically, we set the size of the dataset as $1000$. The number of features varies in $[1, 2, 3, 4, 5, 6, 7, 8, 9, 10, 12, 14, 16, 18, 20, 25]$, and the weights are chosen sequentially in $[0.1, 0.3, 0.5, 0.7, 0.9]$.

\subsubsection{Impact of the Number of Features and of the Weights}
Fig.~\ref{syn_result} and Fig.~\ref{syn_result_class_200} shows the triplet comparison accuracy and pair-comparison classification accuracy for our methods of CPML-s and CPML-m. In these comparisons, we test the cases with $4$ classes and $20$ classes and use the trace-norm as the regularizer in this case, which is our Schatten $1$-norm.

From these figures, we can easily see that, in general, the performance (i.e. triplet comparison accuracy and pair-comparison classification accuracy) improves when the number of features increases. This phenomenon coincides with our common knowledge that the larger the number of informative features we have, the better performance will be. On the weight comparison, it is also clear that the performance will be better with larger weight values.

{These four figures also show that CPML-s and CPML-m have similar performance (we did not find any significant differences between the two models). In summary, the performance tends to be stable when the number of features lies between $10$ and $15$. In that case, one obtains satisfying results even on the least informative features (i.e. weight = 0.1). For the cases with $4$ classes and $20$ classes, we can see they have similar performance trends. The score of the latter looks to be a bit degraded, this might due to the larger number of classes. }

\subsubsection{Impact of Noisy Features}
We assess here whether the presence of noisy features impact the valid calculation of distance. We are using the CPML-s as the exploratory method, and the norms include the trace norm $Tr(M)$, the Frobenius Norm $Tr(M^2)$ and the Schatten $3$-norm $Tr(M^{3})$. $8$ informative features, with favored weight $0.3$, are used here with a set of noisy features (\# N.F. denotes the number of noisy features). We then compute the ratio between the norm of the metric on noisy feature and the norm of the metric on the whole feature space, i.e.  $\frac{r(M_{N.F.})}{r(M)}$. The results are shown in Table \ref{noisy_features}.
%\begin{table*}[htbp] \label{noisy_features}
%\caption{Noisy feature's ratio comparison (mean $\pm$ standard deviation)}\centering
%\begin{tabular}{c|c|c|c|c|c|c}
%\multirow{2}{*}{\# Noisy features}& \multicolumn{2}{c}{Trace norm} & \multicolumn{2}{|c|}{Frobenius norm} & \multicolumn{2}{c}{$L_3$-eigenvalue norm}\\
%\cline{2-7}
% & Euclidean & CPML-s & Euclidean & CPML-s  & Euclidean & CPML-s \\
%\hline
%1  & $0.593 \pm 0.020$  & $0.619 \pm 0.023$ & $0.624 \pm 0.019$ & $0.593 \pm 0.020$  & $0.619 \pm 0.023$ & $0.624 \pm 0.019$  \\
%\hline
%4  & $0.593 \pm 0.020$  & $0.619 \pm 0.023$ & $0.624 \pm 0.019$  \\
%\hline
%7  & $0.593 \pm 0.020$  & $0.619 \pm 0.023$ & $0.624 \pm 0.019$  \\
%\hline
%10  & $0.593 \pm 0.020$  & $0.619 \pm 0.023$ & $0.624 \pm 0.019$  \\
%\hline
%13  & $0.593 \pm 0.020$  & $0.619 \pm 0.023$ & $0.624 \pm 0.019$  \\
%\end{tabular}%
%\end{table*}

\begin{table}[htbp]
\caption{Noisy feature's ratio comparison (mean $\pm$ standard deviation)} \label{noisy_features}
\centering
\begin{tabular}{c|c|c|c}
\hline
\# N.F. & Trace & Frobenius & $L_3$ \\
\hline
1 & $0.006 \pm 0.006$ & $0.006 \pm 0.005$ & $0.004 \pm 0.003$ \\
4 & $0.012 \pm 0.006$ & $0.010 \pm 0.014$ & $0.017 \pm 0.015$ \\
7 & $0.011 \pm 0.006$ & $0.002 \pm 0.016$ & $0.017 \pm 0.025$ \\
10 & $0.002 \pm 0.006$ & $0.001 \pm 0.024$ & $0.005 \pm 0.014$ \\
13 & $0.017 \pm 0.006$ & $0.016 \pm 0.092$ & $0.006 \pm 0.070$ \\
 \hline
\end{tabular}%
\end{table}
From Table \ref{noisy_features}, we can see that, even if the number of noisy features is much larger than the number of meaningful features~($13$ noisy features and $8$ meaningful features), our CPML-s method with various Schatten $p$-norms can successfully control the noisy features as their influence on the metric does not exceed $1.7\%$. This noisy feature resistance property may be explained as the result of our simple distance definitions.

{\subsubsection{Running Time Comparison}
Fig. \ref{run_time} displays comparisons on the logarithm of the running time over different methods. The left part shows the performance of CPML-s, CPML-m and KDML in terms of different classes. As we can see, our CPML-s model is the fastest one  when compared to CPML-m and KDML. Also, the running time of CPML-s does not show a significant difference on the different choices for the number of classes. In contrast, both of CPML-m and KDML require heavier computation, and their running time depends on the number of classes. An interesting observation is that KDML is  faster than CPML-m when the number of classes is small. The right part shows the performance of CPML-s, CPML-m, KDML, LMNN, HELIC and POLA when the number of classes are set to $12$. Due to the online learning nature, POLA is the fastest to obtain the result as it only needs to scan the whole dataset once. Among all other comparison methods, CPML-s require the smallest running time. When HELIC requires smaller running time than the CPML-m algorithm, KDML and LMNN usually require more running time than the CPML-m algorithm (especially the size of dataset is large). }

\begin{figure*}[htbp]
\centering
\includegraphics[scale=0.5, width = 0.4 \textwidth]{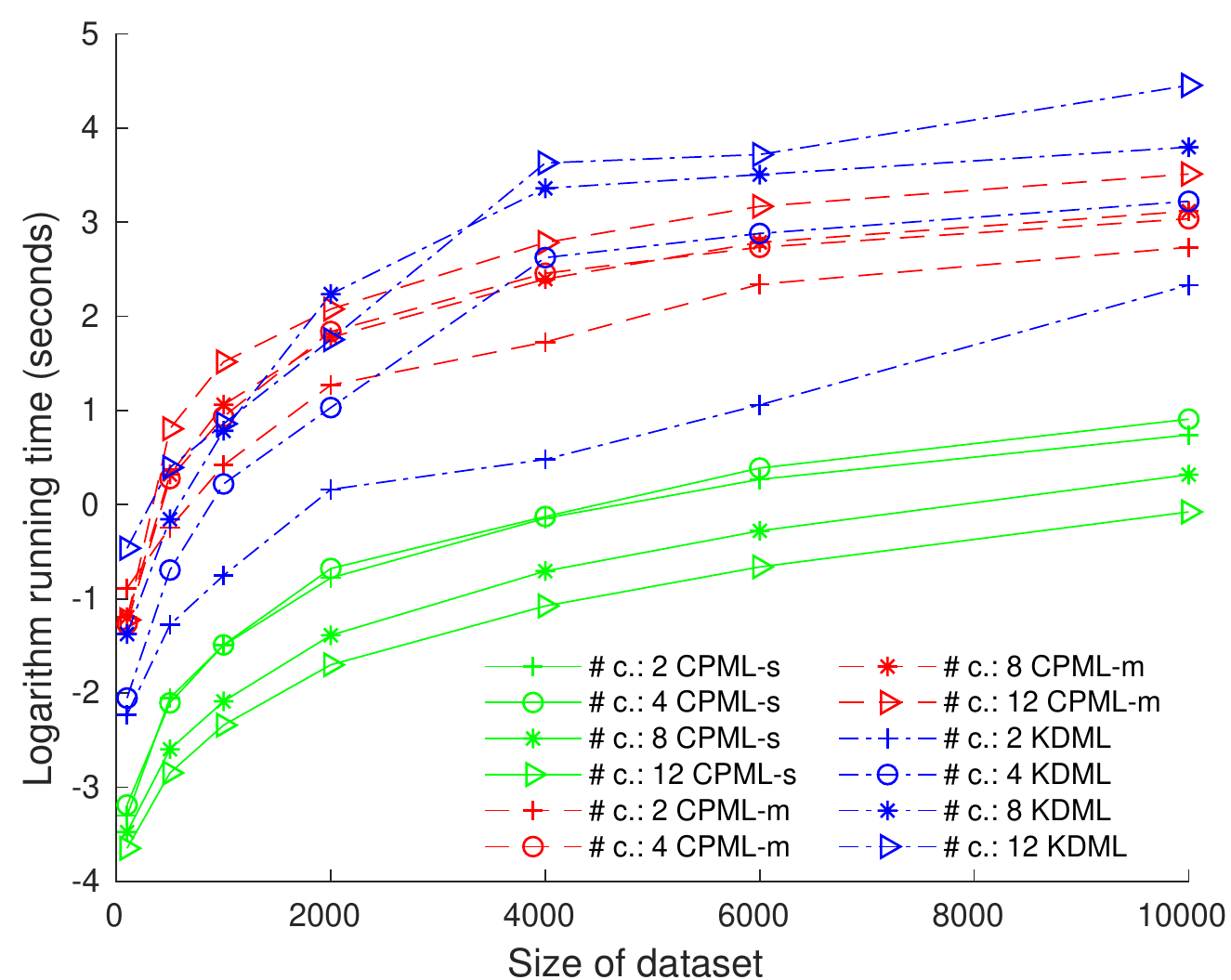}
\includegraphics[scale=0.5, width = 0.4 \textwidth]{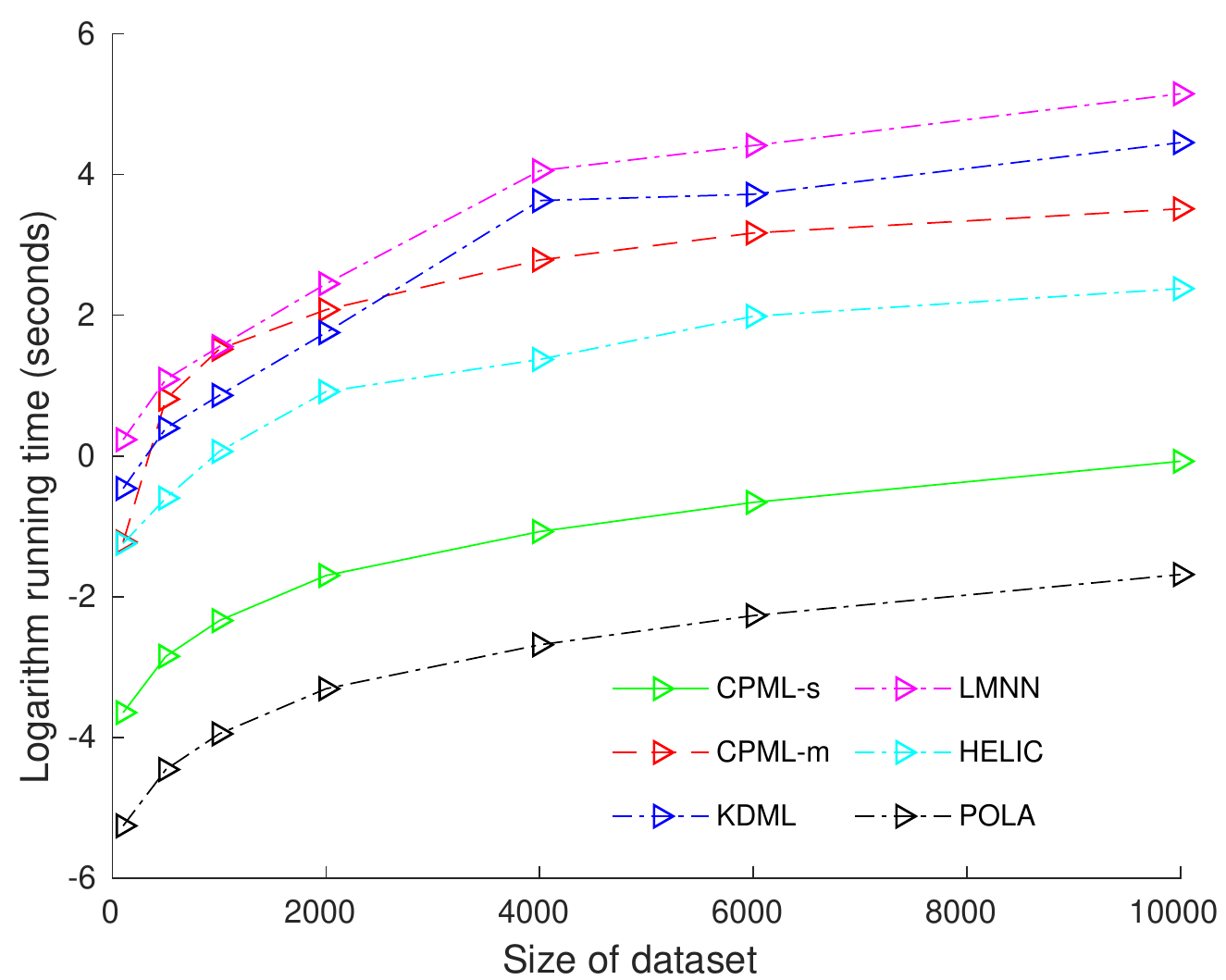}
\caption{Left: logarithm running time comparison of CPML-s, CPML-m and KDML with respect to different number of classes. Here \# c. denotes the number of classes. The blue lines denote the required running time for KDML, the red lines is for our CPML-m and the green lines CPML-s; right: logarithm running time comparison of CPML-s, CPML-m, KDML, LMNN, HELIC and POLA with respect to different sizes of datasets. }
\label{run_time}
\end{figure*}

%\begin{figure*}[htbp]
%\begin{minipage}[b]{1\linewidth}
%\centering
%\includegraphics[scale=1, width =  1\textwidth]{}
%\end{minipage}
%%\hspace{0.5cm}
%\begin{minipage}[b]{1\linewidth}
%\centering
%\includegraphics[scale=1, width =  1\textwidth]{}
%\end{minipage}
%\caption{Triplet comparison (top) and classification accuracy (bottom) in eight UCI datasets. }
%\label{uci_result}
%\end{figure*}

{\subsection{Real world datasets \cite{Bache+Lichman:2013}}
We select $23$ real world datasets to test the performance of the CPM framework: Car, Balance, Mushroom, Voting, Nursery, Monks1, Monks2, Tic-tac-toe, Krkopt, Adult, Connect4, Census, Zoo, DNAPromoter, Lymphography, Audiology, Housevotes, Spect, Soybeanlarge, DNANomial, Splice, Krvskp and Led24. The detail information of these $23$ datasets, including number of instances (\# I.), number of categorical features (\# C.F.) and number of classes (\# C.), is shown in Table \ref{uci_inf}.

\begin{table}[htbp]
\caption{UCI datasets' detail information} \label{uci_inf}
\centering
\begin{tabular}{c|c|c|c}
\hline
Dataset & \# I. & \# C.F. & \# C. \\
\hline
Balance  & $625 $ & $ 4 $ &  $3$ \\
Car  & $1,728$ & $6$ & $4$ \\
Krkopt & $28,056 $ & $ 6 $ & $ 17 $\\
Monks2  & $12,958 $ & $ 8 $ & $ 4$\\
Tic-tac-toe  & $958 $ & $ 8 $ & $ 4$\\
Adult & $30,162 $ & $ 8 $ & $ 2 $\\
Nursery  & $12,960 $ & $ 8 $ & $ 4$\\
Zoo & $101 $ & $ 16 $ & $ 7 $\\
Housevotes & $232 $ & $ 16 $ & $ 2 $\\
Voting & $435 $ & $ 16 $ & $ 2$\\
Monks1 & $435 $ & $ 16 $ & $ 2$\\
Lymphography & $148 $ & $ 18 $ & $ 4 $\\
Mushroom  & $8,124 $ & $ 22 $ & $ 2 $\\
Spect & $267 $ & $ 22 $ & $ 3 $\\
Led24 & $3,200 $ & $ 24 $ & $ 10 $\\
Soybeanlarge & $307 $ & $ 35 $ & $ 19 $\\
Census & $299,285 $ & $ 35 $ & $ 2 $\\
Krvskp & $3,196 $ & $ 36 $ & $ 2 $\\
Connect4 & $67,557 $ & $ 42 $ & $ 3 $\\
DNAPromoter & $106 $ & $ 57 $ & $ 2 $\\
DNANominal & $3,186 $ & $ 60 $ & $ 4 $\\
Splice & $3,190 $ & $ 60 $ & $ 4 $\\
Audiology & $226 $ & $ 69 $ & $ 24 $\\
\hline
\end{tabular}
\end{table}

As shown by the results in Table~\ref{result_1} and Table~\ref{result_2}, our methods achieve very competitive performances against other baseline methods. In all the $23$ datasets, the CPML methods obtain the best performance in most cases. Even if in some datasets like Voting they may not be the best, their performance is quite close to it. What is more, their performance is consistent among different datasets. For other comparison methods, the HELIC and DM3 usually perform better than the KDML and LMNN. This might due to that CPML, HELIC and DM3 are specially designed for categorical data.

\begin{table*}[htbp] 
{\caption{Different models' performance on {\bf triplet comparison} accuracy (mean $\pm$ standard deviation)}}
\centering
\label{result_1}
\begin{tabular}{c|c|c|c|c|c|c|c}
\hline
Dataset & Euclidean & CPML-single & CPML-multi & KDML & LMNN & HELIC & DM3 \\
\hline
Car  & $0.593 \pm 0.020$  & ${0.619 \pm 0.023}$ & $\pmb{0.624 \pm 0.019}$   & $0.579 \pm 0.022$ &  $0.579 \pm 0.013$ &  $0.617 \pm 0.019$ &  $0.609 \pm 0.008$\\
Balance & $0.693 \pm 0.033$  & $0.848 \pm 0.025$ & $0.843 \pm 0.016$ & $0.851 \pm 0.023$ & $0.849 \pm 0.003$ &  $\pmb{0.853 \pm 0.017}$ &  $0.839 \pm 0.021$\\
Mushroom & $0.864 \pm 0.004$  & $\pmb{0.894 \pm 0.004}$  & $0.877 \pm 0.013$  & $0.879 \pm 0.004$ & $0.891 \pm 0.004$ &  $0.889 \pm 0.008$ &  $0.864 \pm 0.007$\\
Voting  & $0.894 \pm 0.041$  & $\pmb{0.934 \pm 0.040}$ & $0.929 \pm 0.031$ & $0.929 \pm 0.060$ & $0.926 \pm 0.003$ &  $0.909 \pm 0.010$ &  $0.910 \pm 0.007$ \\
Nursery & $0.897 \pm 0.004$  & $\pmb{0.901 \pm 0.005}$  & $0.898 \pm 0.004$  & $0.894 \pm 0.006$  & $0.890 \pm 0.006$ &  $0.878 \pm 0.007$ &  $0.893 \pm 0.010$ \\
Monks1 & $0.696 \pm 0.004$  & $0.776 \pm 0.004$  & $\pmb{0.781 \pm 0.013}$  & $0.780 \pm 0.004$ & $0.775 \pm 0.001$ &  $0.753 \pm 0.010$ &  $0.740 \pm 0.013$ \\
Monks2  & $0.556 \pm 0.032$  & $0.796 \pm 0.031$ & $\pmb{0.802 \pm 0.021}$ & $0.789 \pm 0.037$ & $0.793 \pm 0.034$ &  $0.770 \pm 0.010$ &  $0.749 \pm 0.008$ \\
Tic-tac-toe & $0.568 \pm 0.003$  & $0.581 \pm 0.006$  & $0.576 \pm 0.007$  & $0.580 \pm 0.008$  & $0.586 \pm 0.006$ &  $\pmb{0.587 \pm 0.011}$ &  $0.563 \pm 0.009$ \\
Krkopt  & $0.523 \pm 0.004$  & $0.546 \pm 0.023$ & $0.546 \pm 0.012$   & $0.531 \pm 0.019$ &  $0.529 \pm 0.003$ &  $0.537 \pm 0.010$ &  $\pmb{0.549 \pm 0.007}$ \\
Adult & $0.791 \pm 0.010$  & $0.852 \pm 0.020$ & $\pmb{0.861 \pm 0.008}$ & $0.832 \pm 0.007$ & $0.828 \pm 0.005$ &  $0.849 \pm 0.009$ &  $0.837 \pm 0.007$ \\
Connect4 & $0.634 \pm 0.009$  & $\pmb{0.681 \pm 0.012}$  & $0.676 \pm 0.011$  & $0.663 \pm 0.010$ & $0.659 \pm 0.014$ &  $0.656 \pm 0.009$ &  $0.673 \pm 0.019$ \\
Census  & $0.751 \pm 0.013$  & $0.810 \pm 0.007$ & $\pmb{0.816 \pm 0.013}$ & $0.791 \pm 0.016$ & $0.742 \pm 0.024$ &  $0.731 \pm 0.006$ &  $0.748 \pm 0.003$ \\
Zoo & $\pmb{1.000 \pm 0.000}$  & $\pmb{1.000 \pm 0.000}$  & $\pmb{1.000 \pm 0.000}$  & $\pmb{1.000 \pm 0.000}$  & $\pmb{1.000 \pm 0.000}$ &  $\pmb{1.000 \pm 0.000}$ &  $\pmb{1.000 \pm 0.000}$ \\
DNAPromoter & $0.949 \pm 0.014$  & $\pmb{0.973 \pm 0.016}$  & $0.969 \pm 0.011$  & $0.931 \pm 0.015$ & $0.934 \pm 0.013$ &  $0.956 \pm 0.022$ &  $0.946 \pm 0.012$ \\
Lymphography  & $0.861 \pm 0.011$  & $0.882 \pm 0.010$ & $\pmb{0.885 \pm 0.010}$ & $0.867 \pm 0.007$ & $0.873 \pm 0.012$ &  $0.871 \pm 0.009$ &  $0.869 \pm 0.006$ \\
Audiology & $0.742 \pm 0.017$  & $0.773 \pm 0.018$  & $\pmb{0.784 \pm 0.015}$  & $0.739 \pm 0.017$  & $0.759 \pm 0.011$ &  $0.769 \pm 0.009$ &  $0.736 \pm 0.015$ \\
Housevotes  & $0.945 \pm 0.010$  & $0.975 \pm 0.006$ & $\pmb{0.983 \pm 0.005}$   & $0.952 \pm 0.005$ &  $0.962 \pm 0.017$ &  $0.977 \pm 0.016$ &  $0.978 \pm 0.009$ \\
Spect & $0.603 \pm 0.011$  & $0.642 \pm 0.014$ & $\pmb{0.659 \pm 0.029}$ & $0.598 \pm 0.011$ & $0.630 \pm 0.003$ &  $0.626 \pm 0.018$ &  $0.637 \pm 0.014$ \\
Soybeanlarge & $0.803 \pm 0.005$  & $0.846 \pm 0.014$  & $\pmb{0.853 \pm 0.014}$  & $0.827 \pm 0.009$ & $0.836 \pm 0.011$ &  $0.818 \pm 0.008$ &  $0.822 \pm 0.011$ \\
DNANominal  & $0.921 \pm 0.008$  & $\pmb{0.953 \pm 0.017}$ & $0.949 \pm 0.009$ & $0.932 \pm 0.005$ & $0.941 \pm 0.017$ &  $0.924 \pm 0.008$ &  $0.935 \pm 0.017$ \\
Splice & $0.860 \pm 0.010$  & $\pmb{0.898 \pm 0.012}$  & $0.894 \pm 0.010$  & $0.885 \pm 0.011$  & $0.880 \pm 0.009$ &  $0.868 \pm 0.004$ &  $0.877 \pm 0.013$ \\
Krvskp & $0.950 \pm 0.013$  & $0.975 \pm 0.001$  & $\pmb{0.976 \pm 0.002}$  & $0.959 \pm 0.005$ & $0.967 \pm 0.003$ &  $0.924 \pm 0.013$ &  $0.954 \pm 0.003$ \\
Led24  & $0.642 \pm 0.007$  & $\pmb{0.713 \pm 0.005}$ & $0.689 \pm 0.018$ & $0.676 \pm 0.013$ & $0.662 \pm 0.012$ &  $0.678 \pm 0.014$ &  $0.655 \pm 0.011$ \\
\hline
\end{tabular}%
\end{table*}

\begin{table*}[t] 
{\caption{Different models' performance on {\bf classification} accuracy (mean $\pm$ standard deviation)}}
\centering
\label{result_2}
\begin{tabular}{c|c|c|c|c|c|c|c}
\hline
Dataset  & Euclidean & CPML-single & CPML-multi & KDML & LMNN & HELIC & DM3\\
\hline
Car  & $0.969 \pm 0.013$  & $0.970 \pm 0.012$  & $\pmb{0.977 \pm 0.010}$ & $\pmb{0.977 \pm 0.012}$  & $0.975 \pm 0.003$ & $0.969 \pm 0.005$ & $0.966 \pm 0.004$ \\
Balance & $0.866 \pm 0.031$  & $0.938 \pm 0.030$  & $0.929 \pm 0.023$  & $0.936 \pm 0.031$ & $0.940 \pm 0.005$ &  $\pmb{0.941 \pm 0.006}$ &  $0.937 \pm 0.007$ \\
Mushroom & $\pmb{1.000 \pm 0.000}$  & $\pmb{1.000 \pm 0.000}$  & $\pmb{1.000 \pm 0.000}$ & $\pmb{1.000 \pm 0.000}$  & $\pmb{1.000 \pm 0.000}$ &  $\pmb{1.000 \pm 0.000}$ &  $\pmb{1.000 \pm 0.000}$ \\
Voting  & $0.938 \pm 0.032$  & $0.943 \pm 0.043$ & $0.935 \pm 0.037$ & $0.915 \pm 0.112$ & $0.923 \pm 0.007$ &  $0.939 \pm 0.007$ &  $0.924 \pm 0.007$ \\
Nursery  & $0.983 \pm 0.003$  & $0.988 \pm 0.003$   & $\pmb{0.994 \pm 0.003}$   & $0.989 \pm 0.003$  & $0.985 \pm 0.010$ &  $0.993 \pm 0.008$ &  $0.984 \pm 0.006$ \\
Monks1 & $0.842 \pm 0.006$  & $0.849 \pm 0.005$  & $\pmb{0.859 \pm 0.010}$ & $0.856 \pm 0.007$  & $0.853 \pm 0.003$ &  $0.839 \pm 0.017$ &  $0.845 \pm 0.009$ \\
Monks2 & $0.711 \pm 0.001$  & $0.721 \pm 0.002$ & $\pmb{0.723 \pm 0.002}$ & $0.719 \pm 0.006$ & $0.717 \pm 0.003$ &  $0.709 \pm 0.005$ &  $0.713 \pm 0.005$ \\
Tic-tac-toe  & $0.862 \pm 0.006$  & $0.874 \pm 0.001$   & $0.860 \pm 0.007$   & $0.880 \pm 0.010$  & $0.876 \pm 0.008$ &  $\pmb{0.883 \pm 0.007}$ &  $0.864 \pm 0.010$ \\
Krkopt  & $0.513 \pm 0.012$  & $0.519 \pm 0.019$ & $0.523 \pm 0.007$   & $0.508 \pm 0.017$ &  $0.514 \pm 0.005$ &  $\pmb{0.548 \pm 0.007}$ &  $0.537 \pm 0.009$ \\
Adult & $0.837 \pm 0.011$  & $0.853 \pm 0.019$ & $\pmb{0.854 \pm 0.010}$ & $0.839 \pm 0.011$ & $0.836 \pm 0.006$ &  $0.845 \pm 0.007$ &  $0.823 \pm 0.008$ \\
Connect4 & $0.561 \pm 0.007$  & $\pmb{0.571 \pm 0.016}$  & $0.569 \pm 0.009$  & $0.564 \pm 0.007$ & $0.568 \pm 0.012$ &  $0.546 \pm 0.007$ &  $0.556 \pm 0.023$ \\
Census  & $0.671 \pm 0.007$  & $\pmb{0.687 \pm 0.009}$ & $0.685 \pm 0.009$ & $0.681 \pm 0.012$ & $0.675 \pm 0.009$ &  $0.667 \pm 0.007$ &  $0.677 \pm 0.016$ \\
Zoo & $\pmb{1.000 \pm 0.000}$  & $\pmb{1.000 \pm 0.000}$  & $\pmb{1.000 \pm 0.000}$ & $\pmb{1.000 \pm 0.000}$  & $\pmb{1.000 \pm 0.000}$ &  $\pmb{1.000 \pm 0.000}$ &  $\pmb{1.000 \pm 0.000}$ \\
DNAPromoter & $0.896 \pm 0.010$  & $\pmb{0.919 \pm 0.007}$  & $0.916 \pm 0.006$  & $0.903 \pm 0.008$ & $0.902 \pm 0.010$ &  $0.907 \pm 0.009$ &  $0.904 \pm 0.012$ \\
Lymphography  & $0.857 \pm 0.005$  & $\pmb{0.879 \pm 0.008}$ & $0.875 \pm 0.010$ & $0.865 \pm 0.005$ & $0.871 \pm 0.009$ &  $0.866 \pm 0.009$ &  $0.855 \pm 0.003$ \\
Audiology & $0.682 \pm 0.009$  & $0.701 \pm 0.009$  & $\pmb{0.713 \pm 0.009}$  & $0.671 \pm 0.011$  & $0.647 \pm 0.009$ &  $0.708 \pm 0.009$ &  $0.710 \pm 0.018$ \\
Housevotes  & $0.906 \pm 0.015$  & $\pmb{0.941 \pm 0.011}$ & $0.932 \pm 0.018$   & $0.925 \pm 0.017$ &  $0.921 \pm 0.020$ &  $0.922 \pm 0.008$ &  $0.931 \pm 0.014$ \\
Spect & $0.531 \pm 0.007$  & $0.549 \pm 0.004$ & $0.552 \pm 0.009$ & $0.538 \pm 0.015$ & $0.537 \pm 0.005$ &  $0.542 \pm 0.005$ &  $\pmb{0.558 \pm 0.013}$ \\
Soybeanlarge & $0.859 \pm 0.018$  & $0.879 \pm 0.009$  & $\pmb{0.885 \pm 0.004}$  & $0.869 \pm 0.007$ & $0.871 \pm 0.009$ &  $0.869 \pm 0.006$ &  $0.870 \pm 0.015$ \\
DNANominal  & $0.871 \pm 0.032$  & $0.928 \pm 0.021$ & $\pmb{0.936 \pm 0.018}$ & $0.908 \pm 0.015$ & $0.924 \pm 0.007$ &  $0.919 \pm 0.007$ &  $0.909 \pm 0.007$ \\
Splice & $0.834 \pm 0.009$  & $0.879 \pm 0.007$  & $\pmb{0.881 \pm 0.012}$  & $0.851 \pm 0.009$  & $0.862 \pm 0.007$ &  $0.868 \pm 0.006$ &  $0.846 \pm 0.022$ \\
Krvskp & $0.930 \pm 0.010$  & $0.945 \pm 0.003$  & $\pmb{0.946 \pm 0.008}$  & $0.918 \pm 0.035$ & $0.924 \pm 0.009$ &  $0.923 \pm 0.007$ &  $0.915 \pm 0.011$ \\
Led24  & $0.597 \pm 0.019$  & $0.642 \pm 0.007$ & $\pmb{0.651 \pm 0.031}$ & $0.601 \pm 0.008$ & $0.621 \pm 0.019$ &  $0.648 \pm 0.008$ &  $0.643 \pm 0.007$ \\
\hline
\end{tabular}%
\end{table*}
}

%\subsection{Different regularizers' comparison}
%\begin{table*}[htbp] \label{t_2}
%\caption{Different regularizers' performance on predicting accuracy (mean $\pm$ standard deviation)}\centering
%\begin{tabular}{c|c|c|c|c}
%Dataset & $L_1$ & Frobenius & trace & $\log\det$\\
%\hline
%car  &  & & & \\
%\hline
%balance  &  & & & \\
%\hline
%chess  &  & & & \\
%\hline
%voting  &  & & & \\
%\hline
%connect-4  &  & & & \\
%\hline
%bench  &  & & & \\
%\hline
%hayes-roth  & & & & \\
%\end{tabular}
%\end{table*}
%From the value in Table \ref{t_2}, we can see that

\section{Conclusion} \label{sec_7}
Through the experimental validation, we have seen that our models provide competitive and robust classification results compared to previous ones. At the same time, the required computation time is reduced significantly, especially for our CPML-s method.

There are many future work that can be done based on this framework. The number of values for each feature, as well as the distribution of these values, influence the distance calculation and this should be considered. Moreover, our learning task fits within the supervised learning scenario, where all the data points label are known before hand. In practice, however, one sometimes has access to only side-information concerning labels, which corresponds to a semi-supervised learning scenario. A new learning design is needed for this scenario, that requires defining a new projection function.

\bibliographystyle{IEEEtran}
\bibliography{SDM2015}

\appendices

{\section{Proof of Property 1}
For the categorical projected multi distance~(CPm), we have
\begin{align}
& d_M(\boldsymbol{x}_i, \boldsymbol{x}_j) \nonumber \\
=&\sum_{c=1}^C\sum_{p, q}A^{ij}_{c,pq}M_{c,pq} \nonumber \\
= & \sum_{c=1}^C (\phi_c(\boldsymbol{x}_i)-\phi_c(\boldsymbol{x}_j))^{\top}M_c(\phi_c(\boldsymbol{x}_i)-\phi_c(\boldsymbol{x}_j))\ge 0
\end{align}

For the triangle-inequality, we first docompose the positive semi-definite matrix as $M_c = Q^T\Lambda Q$, where $Q$ is the orthogonal matrix and $\Lambda$ is the diagonal matrix as $\Lambda = \text{diag}(\lambda_1, \ldots, \lambda_D)$ and $\lambda_1\ge 0, \ldots, \lambda_D\ge 0$. Further, we let matrix $U$ to be defined as 
$U = \text{diag}(\sqrt{\lambda_1}, \ldots, \sqrt{\lambda_D})Q$. Thus, we have $M_c = U^{\top}U$. Let $\widehat{\phi}(\boldsymbol{x}_i) = U{\phi}(\boldsymbol{x}_i)$, it is easily to see that
\begin{align} \label{euc_eq}
&(\phi_c(\boldsymbol{x}_i)-\phi_c(\boldsymbol{x}_j))^{\top}M_c(\phi_c(\boldsymbol{x}_i)-\phi_c(\boldsymbol{x}_j)) \nonumber \\
= &(\widehat{\phi}_c(\boldsymbol{x}_i)-\widehat{\phi}_c(\boldsymbol{x}_j))^{\top}(\widehat{\phi}_c(\boldsymbol{x}_i)-\widehat{\phi}_c(\boldsymbol{x}_j))\nonumber \\
= & \sum_d (\widehat{\phi}_{cd}(\boldsymbol{x}_i)-\widehat{\phi}_{cd}(\boldsymbol{x}_j))^2
\end{align}
where $\widehat{\phi}_{cd}(\boldsymbol{x}_i)$ refers to the $d$-th element of $\widehat{\phi}_{c}(\boldsymbol{x}_i)$. Eq.~(\ref{euc_eq}) shows the distance can be alternatively represented as a Euclidean distance format. Based on the triangle-inequality of the Euclidean distance, we can straightforwardly get the triangle-inequality of the categorical projected multi-distance. 

For the categorical projected single distance~(CPs), we have
\begin{align}
& d_M(\boldsymbol{x}_i, \boldsymbol{x}_j) \nonumber \\
=&C\sum_{p, q}A^{ij}_{pq}M_{pq} \nonumber \\
= & \sum_{c=1}^C (\phi_c(\boldsymbol{x}_i)-\phi_c(\boldsymbol{x}_j))^{\top}M(\phi_c(\boldsymbol{x}_i)-\phi_c(\boldsymbol{x}_j))\ge 0
\end{align}
The triangle inequality of the categorical projected single distance~(CPs) can be obtained in a similar way. }

\end{document}